\documentclass[11pt]{article}
\usepackage[letterpaper,margin=1in]{geometry}
\usepackage{amsmath}
\usepackage{amsfonts}
\usepackage{amsthm}
\usepackage{thmtools}
\usepackage{thm-restate}
\usepackage{xcolor}
\definecolor{darkgreen}{rgb}{0,0.5,0}
\definecolor{darkblue}{rgb}{0,0,0.7}
\usepackage[hidelinks,colorlinks=true,
            linkcolor=darkblue,
            citecolor=darkgreen,
            urlcolor=darkblue]{hyperref}
\usepackage[nameinlink,capitalise]{cleveref}
\crefname{algocf}{algorithm}{algorithms}
\Crefname{algocf}{Algorithm}{Algorithms}
\usepackage[autostyle=true]{csquotes}

\usepackage[T1]{fontenc}
\usepackage{bbm}
\usepackage{appendix}

\usepackage[ruled,vlined]{algorithm2e}
\usepackage{xcolor}
\usepackage{enumitem}
\usepackage{subcaption}
\usepackage{tikz}
\usepackage{nicefrac}
\usepackage{graphicx}
\usepackage{wrapfig}
\usepackage{natbib}

\theoremstyle{plain}
\newtheorem{theorem}{Theorem}[section]
\newtheorem*{theorem*}{Theorem}
\newtheorem{lemma}[theorem]{Lemma}
\newtheorem{proposition}[theorem]{Proposition}

\theoremstyle{definition}
\newtheorem{definition}[theorem]{Definition}

\theoremstyle{remark}

\DeclareMathOperator*{\argmax}{arg\,max}

\newcommand{\R}{\mathbb{R}}

\newcommand{\ip}[1]{\left\langle #1 \right\rangle}

\newcommand{\E}[2]{\mathbb{E}_{#1}\left[#2\right]}
\newcommand{\V}[2]{{\mathbb{V}}_{#1}\left[#2\right]}

\renewcommand{\P}[2]{{\mathbb{P}}^{#1}\left[#2\right]}
\renewcommand{\exp}[1]{{\textup{exp}}\left(#1\right)}
\newcommand{\ind}[1]{\mathbbm{I}\left\{#1\right\}}

\newcommand{\ALG}{\textup{\textsc{Alg}}}

\newcommand{\Reg}{\textup{\textsc{Reg}}}

\title{Online Convex Optimization with Sublinear Noisy Probes}
\date{}

\author{Simone Di Gregorio\thanks{Sapienza University of Rome, Italy (simone.digregorio@uniroma1.it, leonardi@diag.uniroma1.it)} \and Anupam Gupta\thanks{New York University, USA (anupam.g@nyu.edu)}
\and Stefano Leonardi\footnotemark[1]
\and Matteo Russo\thanks{EPFL, Switzerland (matteo.russo@epfl.ch)}}
\begin{document}

\maketitle

% The abstract
\begin{abstract}
We study Online Convex Optimization (OCO) over a convex set $K\subseteq \R^d$, where in each round $t$ the learner selects $x_t\in K$ and then observes a convex loss $f_t:K\to[0,1]$, with the goal of minimizing regret to the best fixed decision in hindsight. We introduce a unified probing model that generalizes two recent lines of work: sublinear \emph{best-expert} queries in the experts setting \citep{RCCFHKL24}, and pairwise (comparison-based) feedback available every round in OCO \citep{BhaskaraGIKM23}. In our framework, the learner has a budget of $k\le T$ \emph{pairwise probes}; on a probed round it may query two points and learn which one has smaller loss.

Our main result shows that even a \emph{sublinear and noisy} probe budget can provably improve worst-case regret in the full feedback OCO regime. With $k$ $\delta$-noisy pairwise probes, we obtain
\[
\Reg_T \le O\!\left(\min\left\{\sqrt{dT\ln T},\; \frac{dT\ln T}{k|1-2\delta|}\right\}\right),
\]
which is tight (up to logarithmic factors in $T$) across $T$, $k$ and $\delta$. Specifically regarding the noise parameter $\delta \in [0,1]$, 
the regret guarantee smoothly degrades as the oracle response approaches a coin flip, i.e., $\delta$ is close to $\nicefrac{1}{2}$. When applying the same techniques to a finite $K$ for the prediction with $d$ experts setting, the resulting rates are instead completely tight in all parameters, including $d$. Our analysis gives a streamlined treatment of pairwise probing in OCO by quantifying the benefit of probing via a variance reduction effect, combined with a second-order (variance-based) analysis of Continuous Exponential Weights \citep{Bubeck11,RooijEGK14}.
\end{abstract}

%The title page has no number and only contains title and abstract
%\clearpage
\pagenumbering{arabic}
\newif\ifcoltversion
\coltversionfalse
\section{Introduction}\label{sec:intro}
Online convex optimization (OCO) is a general framework for sequential
decision-making in which, over a horizon of $T$ rounds, a learner
repeatedly selects a decision $x_t$ from a convex set $K\subseteq\R^d$,
incurs an adversarially chosen convex loss $f_t(x_t)$, and receives
feedback at the end of each round \citep{Shalev-Shwartz12, Hazan16, orabona}.  The
learner's performance is measured by \emph{regret}, defined as the gap
between its cumulative loss and that of the best fixed comparator
$x^\star\in K$ in hindsight:
\[
\Reg_T = \sum_{t=1}^T f_t(x_t) \;-\; \inf_{x\in K}\sum_{t=1}^T f_t(x).
\]
In the \emph{full-information} variant, the entire loss function $f_t$
(or equivalently a subgradient oracle at every point) is revealed after
each round; classical methods such as Online Gradient Descent and
Follow-the-Regularized-Leader achieve $\Theta(\sqrt{T})$ regret (up to
problem-dependent geometry factors). In the bandit version, only $f_t(x_t)$ is revealed for the played point $x_t$.

The OCO framework subsumes Online linear optimization (OLO), where losses are linear and expressed as $f_t(x)=f_t^\top x$ for vector $f_t \in \R^d$. As a further special case, we have the classical online learning with finitely many
actions/experts: taking $\Delta^{d-1}$ (the $d-1$-dimensional
simplex) and linear losses, the regret
against the best vertex recovers the usual notion of regret in
the experts problem. In this discrete specialization, full
feedback reveals all coordinates of $f_t$, while bandit feedback
reveals only $f_t(x_t)$ for the played expert $x_t$.

Motivated by the recent surge of algorithms augmented by external
predictions, hints, or side-information \citep{MitzenmacherV20}, we
study a model in which the learner may occasionally obtain limited
\emph{advance} information about the losses \emph{before} committing
to its decision on that round. Most relevant to our setting are
\citet{BhaskaraGIKM23} and \citet{RCCFHKL24}, who show that even very
restricted query access can dramatically reduce regret.  In
\citet{BhaskaraGIKM23}, the learner is allowed \emph{pairwise
  comparison probes}: at the beginning of each round $t$, before
choosing the decision $x_t$, the learner may choose two candidate
decisions and learn which one will incur smaller instantaneous loss in
this round, without revealing the loss values themselves. They obtain
time-independent regret bounds for online linear optimization and
improved rates for OCO under additional curvature assumptions. In the
discrete bandits setting, they show that pairwise comparisons can
yield regret of order $O(d\ln T)$ in the stochastic
case. Complementarily, \citet{RCCFHKL24} allow \emph{best-expert
  probes} (which they call queries) in the experts special case only,
which reveal the identity of the minimizing expert in the current
round $t$ (before the decision $x_t$ is played); however, only
$k\le T$ such queries can be made. %They show that in full information
In this setting, they show a 
regret bound of $O\!\left(\min\left\{\sqrt{T\ln d}, \nicefrac{T\ln d}{k}\right\}\right)$.\\

These results raise the following natural question:
\begin{center}
    \emph{If the learner is able to issue only a sublinear number $k \ll T$ of (possibly noisy) \underline{pairwise} probes, how much can the OCO regret be improved with respect to the base $\sqrt{T}$ rate?}
\end{center}
To address this question, we first formalize an OCO-with-probing model
that unifies both discrete and continuous settings, and then state our
results.

\subsection{Our Model}\label{sec:model}

We present a probing-augmented online optimization protocol. The OCO and prediction with expert advice
settings arise as two specializations. 

\subsubsection{Online convex optimization with noisy probing}
\label{subsec:model_oco}

Let $K\subset \mathbb{R}^d$ be either a compact convex set or a finite set. The interaction proceeds for $T$ rounds against an
\emph{oblivious adversary} who pre-commits to a sequence of loss
functions bounded in $[0,1]$: $\{f_t:K\to[0,1]\}_{t=1}^T$.
At each round $t\in\{1,\dots,T\}$, the learner picks
$x_t\in K$, incurs loss $f_t(x_t)$, and then receives feedback $f_t(x)$ for all $x \in K$. Moreover, for appropriate functions $f$ and $g$ we define $\ip{f,g} = \int_K f(x)g(x)\mu(dx)$, where the base measure $\mu$ is explicit or clear from the context.

\paragraph{Probing budget.}
In addition to the feedback, the learner may issue \emph{probes}
in at most $k$ rounds. Let $Q\subseteq\{1,\dots,T\}$ denote the set of probed rounds, decided (possibly randomly and adaptively) by the learner, with $|Q|\le k$. If $t\in Q$, before selecting $x_t$, the learner may
specify a (possibly randomized) finite set of $C$ candidate points
$
P_t=\{y_{t,1},\dots,y_{t,C}\}\subseteq K
$,
and the probe returns the identity of the candidate with minimal
instantaneous loss in $P_t$, possibly corrupted by noise. In
this paper we focus on the least powerful nontrivial probes, namely
pairwise comparisons with $C=2$.

To model noise, we assume that the
probe returns the candidate with the \emph{maximal} (instead of
minimal) loss with probability $\delta$, but our results clearly apply to less challenging
noise: for example, the oracle could return, with probability $\delta$, any of the two probed points, which is better than returning the $\arg\max$, and our regret bounds still hold.
Moreover, any probe model with general $C > 2$ can be reduced to the pairwise case by constructing $P_t$ as multiset made up of $y_{t,1}$ repeated $C-1$ times and $y_{t,2}$ once: our upper bounds thus extend for general $C > 2$.
% \ag{Who decides what $Q$ is? Is $Q$ decided up-front, or adaptively?
%   Right now this is unclear.}

\begin{definition}[Noisy comparison probe]\label{def:probes_oco}
Fix $\delta\in[0,1]$ and an integer $C\geq 1$.
A \emph{$\delta$-noisy (comparison) probe} at round $t$ is a (possibly randomized)
set $P_t=\{y_{t,1},\dots,y_{t,C}\}\subseteq K$ together with an outcome:
\[
\hat{y}_t \;=\;
\begin{cases}
y_t^\star \in \arg\min_{y\in P_t} f_t(y), & \text{with probability } 1-\delta,\\
y_t^-\in \argmax_{y\in P_t} f_t(y), & \text{with probability } \delta.
\end{cases}
\]
Whenever the minimizer or maximizer is not unique, ties are broken uniformly at random. When $\delta=0$, we call the probe \emph{noiseless}. Moreover, a probe is \emph{global} when $P_t = K$ and $\delta = 0$, that is the probe outcome is $y^\star_t \in \arg\min_{y\in K} f_t(y)$ with probability $1$, assuming the minimum is achieved.\footnote{If the minimum is not achieved, one may model the notion of global probe by letting it return any point in a sublevel set characterized by a function value arbitrary close to the minimum.}
\end{definition}

\paragraph{Protocol.}
Formally, first an
oblivious adversary pre-commits to a sequence of convex loss
functions $f_1, \ldots, f_T$ bounded in $[0,1]$, and then, in each
round $t=1,\dots,T$:
\begin{enumerate}
\item %\ag{Should we say, learner decides if $t \in Q$. If so, then
  % learner does blah?}\mr{Yes, it is much clearer, updated.}
  The learner decides if $t\in Q$: if so, it chooses $P_t\subseteq K$
 and observes $\hat{y}_t\in P_t$.
\item The learner chooses $x_t\in K$ (possibly as a randomized function of $\hat{y}_t$ and
  the past).
\item The learner incurs loss $f_t(x_t)$ and observes full feedback $\{f_t(x)\}_{x\in K}$.
\end{enumerate}

\paragraph{Regret.}
For any point $x\in K$, define the cumulative loss
$L_T(x)=\sum_{t=1}^T f_t(x)$. Similarly, let $L_T(\ALG) = \sum_{t=1}^T f_t(x_t)$ be the cumulative loss of a learner $\ALG$ whose decisions over rounds are $x_1, \ldots, x_T$, each belonging to $K$. In turn, these points are sampled from the probability measures induced by the densities $p_1, \ldots, p_T$ set by the learner $\ALG$ across rounds. In the following, for any time $t>0$, we abuse notation and write $x\sim p_t$ for $x$ being sampled from the probability measure induced by $p_t$ and the base measure $\mu$. Moreover, let $v_t = \V{x \sim p_t}{f_t(x)}$ be the variance of the function at time $t$ according to $p_t$, and let $V_t = \sum_{\tau \le t} v_\tau$ be the cumulative variance up until time $t$.  
The (worst-case) expected regret
of an algorithm $\ALG$ is
\[
\E{}{\Reg_T(\ALG)}=
\sup_{\boldsymbol{f}}
\left\{
\E{}{\sum_{t=1}^T f_t(x_t)}
\;-\;
\inf_{x\in K}\sum_{t=1}^T f_t(x)
\right\},
\]
where the expectation is with respect to the learner's internal
randomness and the probe noise, and the supremum ranges over all
obliviously chosen sequences $\boldsymbol{f}=\{f_t\}_{t=1}^T$ of admissible
losses bounded in $[0,1]$; for online convex optimization, we restrict to convex losses. Our goal is sublinear regret, i.e., $\E{}{\Reg_T(\ALG)}=o(T)$, while leveraging at most
$k$ noisy comparison probes.

% \paragraph{On the power of the adversary.} The above protocol dictates that the adversary is the first to move, and the learner moves second. This could be extended to an \emph{adaptive adversary} that, at time $t$, sees all the decisions taken by the learner until time $t-1$, and (possibly) selects $f_t$ based on that. We stress that it is crucial for the adversary not to be able to couple losses construction with probing decisions taken by the learner. Indeed, if the adversary could, then (even with noiseless probes) she would set $f_t\equiv 0$ for all rounds where the learner probes and the best attainable regret would be $\sqrt{T-k}$ as opposed to the $\nicefrac{T}{k}$ rates we obtain in this work (see \Cref{sec:discussion} for a discussion).  

\subsection{Our Results and Technical Challenges}

Our main contribution is to show that even a \emph{sublinear} number of pairwise comparison probes can substantially reduce regret. We unify the two probing paradigms of \citet{BhaskaraGIKM23,RCCFHKL24} into a single framework in which the learner is granted a budget of $k\le T$ \emph{pairwise} (possibly noisy) probes over a horizon of $T$ rounds, and we analyze the resulting regret tradeoffs for full feedback OCO and prediction with experts.

\paragraph{Regret guarantees.}
With $k$ pairwise $\delta$-noisy probes, we prove that one can achieve
\begin{equation}
  \Reg_T \le O\!\left(\min\left\{\sqrt{dT\ln T},\; \frac{dT\ln T}{k\,|1-2\delta|}\right\}\right),
  \label{eq:1}
\end{equation}
for general convex losses bounded in $[0,1]$ (see
\Cref{thm:noise-oco}). In the special case of experts (i.e., linear
losses over the vertices of the simplex $\Delta^{d-1}$), we can
replace a
factor of $d\ln
T$ by $\ln d$ for both terms of the $\min$ in \eqref{eq:1} to get an improved
bound of 
\begin{equation}
  \Reg_T \le O\!\left(\min\left\{\sqrt{T\ln d},\; \frac{T\ln d}{k\,|1-2\delta|}\right\}\right).
  \label{eq:1b}
\end{equation}
A key takeaway is that the regret bounds of \eqref{eq:1b} match those
obtained in \citet{RCCFHKL24} under \emph{best-expert} probes, where
in each queried round the learner effectively ``looks ahead'' and
learns the globally optimal action $x_t^\star$. In the special case of
experts \emph{only}, they leverage a crucial property of best-expert
probes: on a probed round, the learner can guarantee the minimum
possible loss, since it knows the expert with lowest loss at that
round. This yields a non-positive (and potentially negative)
contribution to regret during probing rounds. Combined with a standard
Hedge bound on the remaining rounds, this leads to worst-case regret
$O\left(\tfrac{T\ln d}{k}\right)$ once $k\ge\Omega(\sqrt{T\ln
  d})$. 
  
Making this argument with pairwise probes runs into a basic
obstacle even just for the special case of experts: to benefit, one
must compare against (or identify) a near-best expert with
sufficiently large probability. One approach to doing this is adapting the arguments from \citet{RCCFHKL24} and mix
a uniformly random probe with probes from the distribution of the
Hedge algorithm, to avoid hurting the baseline performance. In \Cref{app:strawman-experts}, we carry over this analysis, attaining suboptimal rates even in the noiseless case. In contrast, our algorithms manage to get the (optimal) bounds in \Cref{eq:1b} using only (noisy) pairwise probes, moreover getting the rate in \Cref{eq:1} for general bounded convex losses over an
arbitrary convex body $K$! 

\paragraph{Our approach and the main technical insight.}
Our main theorem (\Cref{thm:noise-oco}) 
simultaneously handles (i) general convex losses over general convex bodies, (ii) sublinear probe budgets, and (iii) noisy probes, recovering essentially tight (in terms of $T, k$ and $\delta$) bounds on the regret rate. The algorithm is a direct and adaptive probe-augmented version of \emph{continuous} exponential weights: it maintains a density $p_t$ over $K$ and samples points accordingly (e.g., \citet{Bubeck11}). This route avoids differential privacy machinery and dispenses with curvature/gradient assumptions \citep{BhaskaraGIKM23}.

For noiseless probing, the key insight is that a pairwise comparison drawn from $p_t$ yields an advantage that can be measured by the \emph{variance} of the losses under $p_t$. Concretely, on a probing round $t$, comparing two i.i.d.\ samples from $p_t$ and playing the better one decreases the expected loss by $\V{x\sim p_t}{f_t(x)}$, the variance of the function according to the density $p_t$:
\[
    \E{p_t}{f_t(x_t) \mid t \in Q} \le \E{p_t}{f_t(x_t)} - \V{p_t}{f_t(x_t)}.
\]
We then couple this with a variance-based analysis of continuous exponential weights (following the less standard ``second-order'' style guarantees of \citet{RooijEGK14}), which relates the baseline regret to the same variance term. This alignment is what enables the $\nicefrac{T}{k}$-type improvement without an additional factor in $d$, nor making assumptions about gradient norms or curvature.

\paragraph{Noisy probes and learning when to trust them.}
Noisy probes introduce a further challenge: the noise level $\delta$ is unknown, so the learner must also infer whether to trust the probe outcome. We cast this as a lightweight meta-learning layer that, on probed rounds, decides whether to \emph{follow} the probe's suggestion or \emph{invert} it. Instantiating this meta-problem with an adaptive two-action learner allows us to recover the bound in \eqref{eq:1} with the correct dependence on $|1-2\delta|$.

\paragraph{Lower bounds.}
Finally, we show that the rates in~(\ref{eq:1b}) are tight in all parameters for prediction with expert advice, and those
in~(\ref{eq:1}) are tight in $T, k$ and $\delta$ (up to logarithmic factors in $T$) for
general convex losses over arbitrary convex sets; we give these
results in \Cref{sec:lb}.

\subsection{On the Power of the Adversary and the Definition of Noisy Probes}
% \label{sec:discussion}
The protocol above requires the adversary to fix the loss $f_t$ before the learner decides whether to probe at time $t$. It is crucial that the adversarial loss construction is not coupled with the probing decision. If the adversary could anticipate a probe, she could set $f_t \equiv 0$ for that round, rendering the probe useless. This would degrade the optimal regret to $\Theta(\sqrt{T-k})$, preventing the faster $\tilde \Theta(\nicefrac{T}{k})$ rates we achieve. This constraint distinguishes our work from the ``adversarially noisy'' probes considered in the literature \citep{BhaskaraC0P20, BhaskaraC0P21, BhaskaraC0P23, BhaskaraCKP21, BhaskaraGIKM23}. In those settings, the adversary effectively knows when probing occurs. 
For instance, \citet{BhaskaraGIKM23} derive a lower bound of $\Omega(\sqrt{T-k})$ by constructing an instance where the adversary sets losses to $(0,0)$ specifically during noiseless probes (and $(0,1)$ vs. $(1,0)$ each with probability $\nicefrac{1}{2}$ otherwise). This contradicts our results because their model assumes the adversary has access to the learner's internal randomization (knowing \emph{when} a probe happens), whereas our model does not.

%%%%%%%%%%%%%%%%%%%%%%%%%%%%%%%%%%%%%%%%%%%%%%%%%%%%%%%%%%%%

\section{Related Work}\label{sec:rel-work}

The model of learning-augmented algorithms \citep{MitzenmacherV20} has been most recently considered in the context of Online Linear Optimization (OLO) over a convex polytope \citep{Shalev-Shwartz12}. Here, the learner has access to a query mechanism that returns vectors correlated with the true loss vectors. When the optimization domain is the $d$-dimensional unit sphere and the algorithm has full feedback, this additional information enables a reduction in regret from $O(\sqrt{T})$ to $O(\ln T)$, even if the learner only receives such hints on $O(\sqrt{T})$ rounds \citep{BhaskaraC0P20, BhaskaraC0P21, BhaskaraCKP21}. Interestingly, this is not possible with bandit feedback, and a $\Omega(\sqrt{T})$ regret is unavoidable even if the hint vectors are well correlated with the true ones \citep{BhaskaraC0P23}. The same work shows that active queries to specific points at which to evaluate the loss function are enough to guarantee regret rates of the form $O(d^{3/2}\ln T)$, where $d$ is the dimension of the ambient space. Their techniques do not directly apply to our setting, since their probes are defined differently from ours and those from \citep{BhaskaraGIKM23}. 
 Moreover, in subsequent work, \cite{BacchiocchiCMS26} show that under bandit feedback, even in the experts setting and with access to best-arm probes, the worst-case regret remains $\Omega(\sqrt{T - k})$.

Regarding pairwise probes for OCO in particular, the closest work to ours is
  \citet{BhaskaraGIKM23}, who obtain a $\Theta(Hd+Gd^{3/2})$ regret
  bound in the full feedback setting when probing is available every
  round ($k=T$), under uniform bounds $G,H$ on gradient norms and
  curvature. Their analysis relies on a differential-privacy-based
  reduction to Be-The-Regularized-Leader (BTRL), 
  and the
  dependence on curvature and gradient bounds appears intrinsic to
  their arguments, which we avoid using our techniques.

Techniques developed in this work, as well as in prior ones, aim to adapt to structure in the loss sequence during non-probing rounds, leading to regret guarantees that depend on the "easiness" of the instance. These forms of easiness include stochastic losses or small effective loss ranges. In full-information settings, such structure can be leveraged by algorithms like Hedge to achieve regret depending on the easiness of the instance, without degrading worst-case performance \citep{Cesa-BianchiMS07, GaillardSE14, KoolenE15, LuoS15}. Another relevant direction is the predictable sequences framework, which focuses on scenarios where the loss sequence follows a regular pattern or is correlated with past observations \citep{RakhlinS13}. Algorithms in this line adjust their learning dynamics to exploit such predictability and achieve improved regret when it exists \citep{SteinhardtL14a, WeiL18}. Limited supervision has also been studied through the abstention learning framework, where learners may choose not to predict and instead defer to an oracle, thereby trading off prediction effort with access to external feedback \citep{LiLWS11, SayediZB10, ZhangC16, CortesDGMY18, NeuZ20, GangradeKCS21}.

Our work is also connected to stochastic probing, where the learner acts under uncertainty and can selectively probe the environment to obtain partial feedback \citep{GuptaN13, GuptaNS17, Singla18}. This model has applications in problems like Pandora's Box \citep{Weitzman79, BeyhaghiK19, BeyhaghiC23}, online matching \citep{Singla18}, and submodular optimization \citep{PattonR023}. A central challenge in these settings is deciding when and what to probe to optimize long-term performance under uncertainty, and recent algorithms address this by carefully balancing exploration with exploitation \citep{AgarwalGN24}.
\section{Main Algorithm and Regret Guarantees}
\label{sec:main}

In this section, we show how a sublinear number of (noisy) probes enables a significant drop in regret rates. To do so, we recall a generic algorithmic template based on Continuous Exponential Weights with different priors $p_1$ over the domain $K$. This template has been used by, e.g., \citet{RooijEGK14} for prediction with expert advice, and by \citet{Bubeck11} for OCO: we augment it with a \textsc{Meta-Learner} and $k$ pairwise comparison probes issued uniformly at random throughout the time horizon (see below). Different instantiations of \Cref{alg:cew-probe-noisy} guarantee regret bounds for general convex classes and stronger bounds for the experts setting (see Section 3.5). 

\paragraph{Meta-Learner.} To handle the noisy case with unknown $\delta$, the idea is to run a meta-learner on the \textit{``follow the oracle''} and the \textit{``invert the oracle''} actions, succinctly denoted as $F$ and $I$ respectively. Essentially, on probing rounds, we do what the \textsc{Meta-Learner} says: if the routine \textsc{Meta-Learner} returns $F$, the algorithm follows the probe and plays $\hat y_t$, otherwise it plays the other probed point. After receiving feedback, the \textsc{Meta-Learner} is updated based on the value of $f_t$ in the probed points.

Here, the \textsc{Meta-Learner} is simply AdaHedge (see Theorem 5 in \citet{Cesa-BianchiMS07} and Theorem 6 in \citet{RooijEGK14}) run on the two actions $F,I$ and for rounds $Q \subseteq [T]$ only (see pseudocode in \Cref{alg:cew-probe-noisy} for convenience). AdaHedge is a standard Hedge routine with a time-adaptive learning rate, which allows getting variance-based regret bounds. The losses for $F$ and $I$ are simply the values of $f_t$ obtained by either following or inverting the oracle.
\ifcoltversion
\begin{algorithm2e}
    \SetAlgoNoEnd
    \caption{Probe-Augmented Continuous Exponential Weights with Noisy Probes}
    \label{alg:cew-probe-noisy}
    \SetKwInOut{Input}{Input}
    
    \Input{Sequence of measurable functions $f_t$, domain $K\subseteq \mathbb{R}^d$, prior density $p_1$ with respect to base measure $\mu$ over $K$, probe budget $k \le T$, parameter $\Lambda$}
    
    Let $Z_1 = \int_K p_1(x) \mu(dx) = 1$, $V_0 = 0$. Uniformly sample $k$ out of $T$ rounds, obtaining $Q$
    
    \For{$t=1,\dots,T$}{
        \eIf{$t\in Q$}{
            Probe $2$ points $y_{t,1},y_{t,2} \sim p_t$ independently and observe \(\hat y_t\)
            
            \eIf{\textsc{Meta-Learner}$(\cdot) = F$}{Play $\hat y_t$}{Play $y_{t, 1}$ if $y_{t, 1}\neq \hat y_t$ else play $y_{t, 2}$}
        }{
            Play $x_t \sim p_t$\;
        }
        
        Observe loss function $f_t(x)$ for all $x \in K$\;
        
        Set $V_t = V_{t-1}+ v_t$ and  $\eta_{t+1} = \min\left(\tfrac{1}{2}, \sqrt{\tfrac{\Lambda}{V_t + 1}}\right)$
        
        Update density for all $x \in K$ as
        \[
            p_{t+1}(x) = \frac{p_1(x) \cdot \exp{-\eta_{t+1} L_t(x)}}{Z_{t+1}}
            \quad\text{where}\quad
            Z_{t+1} = \int_K p_1(x) \cdot \exp{-\eta_{t+1} L_t(x)} \mu(dx)
        \]

        Update \textsc{Meta-Learner} using $\{f_t(y_{t, 1}), f_t(y_{t, 2})\}$ \textbf{if} $t\in Q$
    }
\end{algorithm2e}
\else
\begin{algorithm}
    \SetAlgoNoEnd
    \caption{Probe-Augmented Continuous Exponential Weights with Noisy Probes}
    \label{alg:cew-probe-noisy}
    \SetKwInOut{Input}{Input}
    
    \Input{Sequence of measurable functions $f_t$, domain $K\subseteq \mathbb{R}^d$, prior density $p_1$ with respect to base measure $\mu$ over $K$, probe budget $k \le T$, parameter $\Lambda$}
    
    Let $Z_1 = \int_K p_1(x) \mu(dx) = 1$, $V_0 = 0$. Uniformly sample $k$ out of $T$ rounds, obtaining $Q$
    
    \For{$t=1,\dots,T$}{
        \eIf{$t\in Q$}{
            Probe $2$ points $y_{t,1},y_{t,2} \sim p_t$ independently and observe \(\hat y_t\)
            
            \eIf{\textsc{Meta-Learner}$(\cdot) = F$}{Play $\hat y_t$}{Play $y_{t, 1}$ if $y_{t, 1}\neq \hat y_t$ else play $y_{t, 2}$}
        }{
            Play $x_t \sim p_t$\;
        }
        
        Observe loss function $f_t(x)$ for all $x \in K$\;
        
        Set $V_t = V_{t-1}+ v_t$ and  $\eta_{t+1} = \min\left(\tfrac{1}{2}, \sqrt{\tfrac{\Lambda}{V_t + 1}}\right)$
        
        Update density for all $x \in K$ as
        \[
            p_{t+1}(x) = \frac{p_1(x) \cdot \exp{-\eta_{t+1} L_t(x)}}{Z_{t+1}}
            \quad\text{where}\quad
            Z_{t+1} = \int_K p_1(x) \cdot \exp{-\eta_{t+1} L_t(x)} \mu(dx)
        \]

        Update \textsc{Meta-Learner} using $\{f_t(y_{t, 1}), f_t(y_{t, 2})\}$ \textbf{if} $t\in Q$
    }
\end{algorithm}
\fi

\begin{theorem}
\label{thm:noise-oco}
    Consider the problem of online convex optimization with full feedback and $k$ pairwise comparison probes: $K\subset \R^d$ is convex and compact, with non-empty interior.
    Then, for an unknown noise parameter $\delta \in [0,1]$ and for any sequence of convex functions $\{f_t\}_{t=1}^T$ bounded in $[0,1]$, when instantiated with $p_1$ being the uniform density over $K$ and $\Lambda = d\ln T$, \Cref{alg:cew-probe-noisy} has regret
            \[
                \E{}{\Reg_T(\ALG_k)} \leq O\left(\min\left(\sqrt{dT\ln T}, \frac{dT\ln T}{k|1-2\delta|}\right)\right).
            \]
\end{theorem}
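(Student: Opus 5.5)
The plan is to reduce the statement to three ingredients: a second-order regret bound for continuous exponential weights, a variance-reduction gain on probed rounds, and a meta-learner that extracts that gain despite noise. Combining them gives a bound of the form $\text{(regret of CEW)} - c\,\tfrac{k}{T}|1-2\delta|\,\E{}{V_T}$, which we then optimize over $V_T$.

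\textbf{Step 1 (second-order bound).} Following \citet{RooijEGK14} and \citet{Bubeck11}, I would show that for CEW with the adaptive rate $\eta_{t+1}=\min(\tfrac12,\sqrt{\Lambda/(V_t+1)})$ and uniform prior,
\[
\sum_t \E{x\sim p_t}{f_t(x)} - L_T(x^\star) \le O\!\left(\sqrt{\Lambda(V_T+1)}+\Lambda\right).
\]
The per-round mix-loss gap is bounded by $\eta_t v_t$ using $e^{-y}\le 1-y+y^2$ for $\eta_t f_t\le \tfrac12$. The telescoped potential is handled through the monotonicity of $\eta$. The comparator term is handled by the standard convexity argument: compare $L_T(x^\star)$ with its average over the shrunken set $(1-\epsilon)x^\star+\epsilon K$, whose $\mu$-measure relative to $K$ is $\epsilon^d$. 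Convexity and $[0,1]$ bounds give an additive $\epsilon T$, plus $\tfrac{d\ln(1/\epsilon)}{\eta_{T+1}}$. Taking $\epsilon=1/T$ yields the $\Lambda=d\ln T$ dependence.

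\textbf{Step 2 (probe gain).} On a probed round, let $m=\min(f_t(y_1),f_t(y_2))$ and $M=\max(f_t(y_1),f_t(y_2))$ with $y_1,y_2\sim p_t$ i.i.d. Then $\E{}{M-m}=\E{}{|f_t(y_1)-f_t(y_2)|}\ge \E{}{(f_t(y_1)-f_t(y_2))^2}=2v_t$, since $|\cdot|\le1$. Following the oracle yields expected loss $\E{p_t}{f_t}-(1-2\delta)\tfrac12\E{}{M-m}$, and inverting yields the symmetric expression with the sign flipped. By AdaHedge's guarantee on $F,I$ restricted to $Q$, the meta-learner's loss is at most that of the better of $F,I$ plus $O(\sqrt{\sum_{t\in Q}(\text{variance of }F\text{ vs }I)}+1)$. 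The meta-learner variance per round is at most $\E{}{(M-m)^2}\le \E{}{M-m}$. So with $g_t=\E{}{M-m}$, the probed-round excess over $\E{p_t}{f_t}$ is at most $-\tfrac{|1-2\delta|}{2}\sum_{t\in Q}g_t + O\bigl(\sqrt{\sum_{t\in Q}g_t}+1\bigr)$. Since $\sqrt{G}\le \tfrac{|1-2\delta|}{4}G+O(1/|1-2\delta|)$, this is at most $-\tfrac{|1-2\delta|}{4}\sum_{t\in Q} g_t+O(1/|1-2\delta|)$. The additive $1/|1-2\delta|$ is dominated by the target bound as $k\le T$. Since $Q$ is a uniformly random set independent of the oblivious losses, and $p_t$ depends only on past losses rather than $Q$, we get $\E{}{\sum_{t\in Q} g_t}\ge \tfrac{2k}{T}V_T$.

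\textbf{Step 3 (combine).} The regret is at most $O(\sqrt{\Lambda V_T}+\Lambda)-c\tfrac{k|1-2\delta|}{T}V_T$. Maximizing over $V_T\ge0$ gives $O\bigl(\tfrac{\Lambda T}{k|1-2\delta|}\bigr)$. Dropping the negative term and using $V_T\le T$ gives $O(\sqrt{\Lambda T})$. Taking the minimum with $\Lambda=d\ln T$ yields the theorem.

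\textbf{Main obstacle.} I expect the hardest part to be Step 2's meta-learner accounting. The AdaHedge regret must be controlled by a quantity that is itself absorbed by the probe gain, which requires the variance of the $F$/$I$ loss difference to be bounded by $g_t$. The bound must also hold uniformly whichever of $F,I$ is better, i.e., whether $\delta<\tfrac12$ or $\delta>\tfrac12$. A secondary care point is that $V_T$ is random, so the combination in Step 3 should be done in expectation via Jensen on $\sqrt{\cdot}$ before optimizing.
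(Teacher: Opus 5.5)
Your plan matches the paper's proof in essentially every step. The ingredients are the same: CEW with the variance-adaptive rate and a shrunken-set comparator giving $O(\Lambda+\sqrt{\Lambda(1+V_T)})$, the inequality $|a-b|\ge(a-b)^2$ that turns a pairwise probe into a $|1-2\delta|\,v_t$ gain, AdaHedge over follow/invert, and a final optimization over $V_T$. Your worry that $V_T$ is random is unnecessary: $p_t$, and hence $v_t$, depends only on the oblivious losses, so $V_T$ is deterministic.

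One small correction concerns your absorption step $\sqrt{G}\le\tfrac{|1-2\delta|}{4}G+O(1/|1-2\delta|)$. The additive $O(1/|1-2\delta|)$ it leaves is dominated by $\tfrac{\Lambda T}{k|1-2\delta|}$. It is \emph{not} dominated by $\sqrt{\Lambda T}$ when $\delta$ is near $\tfrac12$, and it is infinite at $\delta=\tfrac12$. So in the $\sqrt{\Lambda T}$ branch you must fall back on the unabsorbed overhead $O(\sqrt{k}+1)$. Simpler, and as the paper does: use $\E{}{(M-m)^2}=2v_t$ exactly to bound the AdaHedge overhead by $\sqrt{2kV_T/T}\le 2\sqrt{\Lambda(1+V_T)}$. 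Folding this into the CEW term never produces a $1/|1-2\delta|$ term, so the detour through $g_t$ and the absorption is not needed.
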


In order to prove the above theorem, let us first write the following decomposition that holds for any fixed density $q$ over $K$ and any fixed point $x^\star \in K$:
\begin{align}\label{eq:regret-decomposition}
    \E{}{\sum_{t=1}^T f_t(x_t)} - \sum_{t=1}^T f_t(x^\star) = \underbrace{\E{}{\sum_{t=1}^T f_t(x_t)} - \sum_{t=1}^T \ip{q, f_t}}_{\text{(A)}} + \underbrace{\sum_{t=1}^T \ip{q, f_t} - \sum_{t=1}^T f_t(x^\star)}_{\text{(B)}}.
\end{align}
We bound term (A) and (B) separately; convexity of $f_t$'s is used only for bounding (B).

\subsection{Loss during non-probing rounds}

In this subsection, we compare the performance of \Cref{alg:cew-probe-noisy} without considering probes against any distribution $q$ over the domain $K$. We assume that $q \ll p_1$, i.e. $q$ is absolutely continuous w.r.t. $p_1$, where by this we mean that the property is satisfied by the two underlying measures. The analysis is similar in spirit to the one present in \cite{Cesa-BianchiMS07,RooijEGK14}. Before proceeding, we bound the average loss using the log-partition function and a variance term.
\begin{restatable}{lemma}{lemvarianceonestepbound}\label{lem::variance-one-step-bound}
    It holds that
    \[
        \ip{p_t, f_t} \le \frac{1}{\eta_t}\ln Z_t - \frac{1}{\eta_{t+1}}\ln Z_{t+1} + \frac{\eta_t}{2(1-\eta_t)} \cdot v_t
    \]
\end{restatable}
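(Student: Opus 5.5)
We prove the bound by splitting it into a fixed-learning-rate mixability step and a learning-rate-change step. Introduce the intermediate quantity
\[
\tilde Z_{t+1} = \int_K p_1(x)\exp{-\eta_t L_t(x)}\mu(dx),
\]
which is the normalizer obtained by updating with the \emph{old} rate $\eta_t$. Then
\[
\frac{1}{\eta_t}\ln Z_t - \frac{1}{\eta_{t+1}}\ln Z_{t+1} = -\frac{1}{\eta_t}\ln\frac{\tilde Z_{t+1}}{Z_t} + \left(\frac{1}{\eta_t}\ln \tilde Z_{t+1} - \frac{1}{\eta_{t+1}}\ln Z_{t+1}\right).
\]
It therefore suffices to show two things:
\begin{enumerate}
\item $\ip{p_t,f_t} \le -\frac{1}{\eta_t}\ln\frac{\tilde Z_{t+1}}{Z_t} + \frac{\eta_t}{2(1-\eta_t)}v_t$;
\item the bracketed difference is nonnegative.
\end{enumerate}

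\textbf{First step.} Since $p_t = p_1 e^{-\eta_t L_{t-1}}/Z_t$, we have $\tilde Z_{t+1}/Z_t = \E{x\sim p_t}{e^{-\eta_t f_t(x)}}$. Write $m=\ip{p_t,f_t}$ and $g = f_t - m$, so that $\E{p_t}{g}=0$, $\E{p_t}{g^2}=v_t$, and $g\ge -m\ge -1$. Then
\[
-\frac{1}{\eta_t}\ln \E{p_t}{e^{-\eta_t f_t}} = m - \frac{1}{\eta_t}\ln\E{p_t}{e^{-\eta_t g}},
\]
and we need to upper bound $\frac{1}{\eta_t}\ln\E{p_t}{e^{-\eta_t g}}$ by $\frac{\eta_t}{2(1-\eta_t)}v_t$. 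Use $\ln y\le y-1$ together with a pointwise bound $e^{-u} \le 1-u+\frac{u^2}{2(1-\eta_t)}$ for $u=\eta_t g$. Because $f_t\in[0,1]$, we have $u\ge -\eta_t$ and $u\le \eta_t\le 1/2$. For $u\ge 0$ the standard bound $e^{-u}\le 1-u+u^2/2$ suffices. For $u\in[-\eta_t,0)$, use the Taylor series: $e^{-u}-1+u = \sum_{j\ge2}|u|^j/j! \le \frac{u^2}{2}\sum_{i\ge0}|u|^i \le \frac{u^2}{2(1-\eta_t)}$. Taking expectations, the linear term vanishes and we obtain $\E{p_t}{e^{-\eta_t g}} \le 1+\frac{\eta_t^2 v_t}{2(1-\eta_t)}$. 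Dividing by $\eta_t$ gives the first step.

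\textbf{Second step.} This is the step where care is needed. The learning rates are nonincreasing, $\eta_{t+1}\le\eta_t$, because $V_t$ is nondecreasing. Define $\phi(\eta)=\frac{1}{\eta}\ln\int_K p_1 e^{-\eta L_t}\,d\mu$. We need $\phi(\eta_t)\ge\phi(\eta_{t+1})$, i.e.\ that $\phi$ is nonincreasing in $\eta$. Write $\phi(\eta) = -\left(-\frac{1}{\eta}\ln \E{p_1}{e^{-\eta L_t}}\right)$. The map $\eta\mapsto\left(\E{p_1}{e^{-\eta L_t}}\right)^{1/\eta}$ is a power mean of $e^{-L_t}$ with exponent $\eta$, so it is nondecreasing in $\eta$ by the power-mean (Lyapunov/H\"older) inequality. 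Hence $\phi$ is nondecreasing in $\eta$, which gives $\phi(\eta_t)\ge\phi(\eta_{t+1})$ when $\eta_t\ge\eta_{t+1}$.

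Recall that $\phi(\eta_t)=\frac{1}{\eta_t}\ln\tilde Z_{t+1}$ and $\phi(\eta_{t+1})=\frac{1}{\eta_{t+1}}\ln Z_{t+1}$. So the bracketed difference is $\phi(\eta_t)-\phi(\eta_{t+1})\ge 0$, as required. Combining the two steps proves the lemma.

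The only delicate points are getting the direction of monotonicity right and keeping the constant $\frac{1}{2(1-\eta_t)}$ in the exponential inequality. Measurability and integrability are immediate because $L_t$ is bounded and $p_1$ is a probability density.
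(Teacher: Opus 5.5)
Your proof is correct and follows the paper's argument. Both use the same intermediate normalizer $\tilde Z_{t+1}$ with the old rate $\eta_t$, bound $\ln \E{p_t}{e^{-\eta_t f_t}}$ by a Bernstein-type estimate (you derive it directly from $\ln y\le y-1$ and the Taylor bound, where the paper cites a Bernstein subexponential inequality), and use power-mean monotonicity, which is exactly the paper's Jensen step with the concave map $y\mapsto y^{\eta_{t+1}/\eta_t}$. One wording slip: you say that $\phi(\eta_t)\ge\phi(\eta_{t+1})$ means $\phi$ is \emph{nonincreasing} in $\eta$, but it means \emph{nondecreasing}, which is what you then prove and use.
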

    \begin{proof}
   For this proof, we recall the following bound (e.g. \citet[Proposition 2.10]{Wainwright19}):
   \begin{proposition}[Bernstein Subexponential Tail Bound]
    \label{prop:bernstein_bound}
    For a bounded random variable $X$ with $|X - \E{}{X}| \leq 1$, it holds that:
    \begin{align*}
        \ln \E{}{\exp{-\eta X}} \leq -\eta \cdot \E{}{X} + \frac{\eta^2}{2(1-|\eta|)} \cdot \V{}{X} \qquad \forall\, \eta: |\eta| < 1.\label{fct:bernstein}
    \end{align*}
    \end{proposition}
   
    To use the above, we define an intermediate normalizer, which uses the same learning rate $\eta_t$ but after observing $f_t$; conventionally, in what follows, we let $\eta_1 = \nicefrac{1}{2}$: 
    \begin{align*}
        \widetilde Z_{t+1} = \int_K p_1(x) \cdot \exp{-\eta_t L_t(x)} \mu(dx) &= Z_t \cdot \int_K p_t(x) \cdot \exp{-\eta_t f_t(x)} \mu(dx) \\
        &= Z_t \cdot \E{x_t \sim p_t}{\exp{-\eta_t f_t(x_t)}}.
    \end{align*} 
    Applying Proposition \ref{prop:bernstein_bound} to the bounded random variable $f_t(x_t)$ with density $p_t$, we then obtain:
    \begin{align*}
        \ln \left(\frac{\widetilde Z_{t+1}}{Z_t}\right) = \ln \E{x_t \sim p_t}{\exp{-\eta_t f_t(x_t)}}&\leq -\eta_t \cdot \E{x_t \sim p_t}{f_t(x_t)} + \frac{\eta_t^2}{2(1-\eta_t)} \cdot \V{x_t \sim p_t}{f_t(x_t)} \\
        &= -\eta_t \cdot \ip{p_t, f_t} + \frac{\eta_t^2}{2(1-\eta_t)} \cdot v_t.
    \end{align*}
    Hence, using that $\eta_t\leq 1$ and rearranging:
    \[
        \ip{p_t, f_t} \le \frac{1}{\eta_t}\ln \left(\frac{Z_t}{\widetilde Z_{t+1}}\right) + \frac{\eta_t}{2(1-\eta_t)} \cdot v_t.
    \]
    Let us now write
    \begin{align*}
        Z_{t+1} = \int_K p_1(x) \cdot \exp{-\eta_{t+1} L_t(x)} \mu(dx) &= \int_K p_1(x) \cdot \exp{-\eta_t L_t(x)}^{\nicefrac{\eta_{t+1}}{\eta_t}} \mu(dx) \\
        &\le \left(\int_K p_1(x) \cdot \exp{-\eta_t L_t(x)} \mu(dx)\right)   ^{\nicefrac{\eta_{t+1}}{\eta_t}} \\
        & = (\widetilde Z_{t+1})^{\nicefrac{\eta_{t+1}}{\eta_t}},
    \end{align*}
    where the inequality follows by Jensen's inequality since the function $y \mapsto y^{\nicefrac{\eta_{t+1}}{\eta_t}}$ is concave in $y$ as $\eta_{t+1} \le \eta_t$. Combining the above two displays, we have:
    \begin{align*}
        \ip{p_t, f_t} \le \frac{1}{\eta_t}\ln \left(\frac{Z_t}{\widetilde Z_{t+1}}\right) + \frac{\eta_t}{2(1-\eta_t)} \cdot v_t \le \frac{1}{\eta_t}\ln Z_t - \frac{1}{\eta_{t+1}}\ln Z_{t+1} + \frac{\eta_t}{2(1-\eta_t)} \cdot v_t,
    \end{align*}
    as desired.
\end{proof}

\begin{lemma}\label{lem:noprobe-oco}
    Let $q$ be an absolutely continuous density with respect to $p_1$. If $\Lambda \ge \max(1, \textup{KL}(q||p_1))$, it holds that \Cref{alg:cew-probe-noisy} without probes, referred to as $\ALG_0$, satisfies:
    \begin{align*}
        \sum_{t=1}^T \ip{p_t-q, f_t} \le 2\Lambda + 5\sqrt{\Lambda(1 + V_T)}.
    \end{align*}
\end{lemma}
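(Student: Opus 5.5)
We sum \Cref{lem::variance-one-step-bound} over $t=1,\dots,T$. The log-partition terms telescope to
\[
\frac{1}{\eta_1}\ln Z_1 - \frac{1}{\eta_{T+1}}\ln Z_{T+1},
\]
and $\ln Z_1=0$. This gives
\[
\sum_{t=1}^T \ip{p_t,f_t} \le -\frac{1}{\eta_{T+1}}\ln Z_{T+1} + \sum_{t=1}^T \frac{\eta_t}{2(1-\eta_t)}v_t .
\]
The proof then has two parts: lower bound $\ln Z_{T+1}$ through the comparator $q$, and control the variance sum using the adaptive learning rate.

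\textbf{Comparator term.} We use the Donsker--Varadhan (Gibbs) variational inequality: for any $q\ll p_1$,
\[
\ln \int_K p_1 e^{-\eta L_T}\,d\mu \ge -\eta\ip{q,L_T} - \textup{KL}(q\|p_1).
\]
Applying it with $\eta=\eta_{T+1}$ gives
\[
-\frac{1}{\eta_{T+1}}\ln Z_{T+1} \le \sum_t \ip{q,f_t} + \frac{\textup{KL}(q\|p_1)}{\eta_{T+1}} \le \sum_t\ip{q,f_t} + \frac{\Lambda}{\eta_{T+1}} .
\]
Since $\eta_{T+1}=\min(1/2,\sqrt{\Lambda/(V_T+1)})$, we have $1/\eta_{T+1}\le 2+\sqrt{(V_T+1)/\Lambda}$. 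Hence $\Lambda/\eta_{T+1}\le 2\Lambda+\sqrt{\Lambda(1+V_T)}$, which already produces the $2\Lambda$ term and one copy of the square-root term.

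\textbf{Variance term.} Because $\eta_t\le 1/2$, we have $\frac{\eta_t}{2(1-\eta_t)}\le \eta_t$, so it suffices to bound $\sum_t \eta_t v_t$. We use $\eta_t\le\sqrt{\Lambda/(V_{t-1}+1)}$, which holds for $t\ge2$ and also for $t=1$ because $\Lambda\ge1$ and $\eta_1=1/2$. Since $v_t\le 1/4\le 1$, we have $V_t+1\le 2(V_{t-1}+1)$, so
\[
\frac{v_t}{\sqrt{V_{t-1}+1}} \le \sqrt{2}\,\frac{v_t}{\sqrt{V_t+1}} .
\]
The standard integral comparison gives
\[
\sum_t \frac{v_t}{\sqrt{V_t+1}} \le \int_1^{V_T+1} \frac{ds}{\sqrt{s}} \le 2\sqrt{1+V_T}.
\]
Therefore $\sum_t \eta_t v_t \le 2\sqrt2\,\sqrt{\Lambda(1+V_T)}$.

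\textbf{Conclusion and main obstacle.} Combining the two parts gives a total of at most $2\Lambda + (1+2\sqrt2)\sqrt{\Lambda(1+V_T)}$, which is below $5\sqrt{\cdot}$. The main care points are the boundary conventions ($\eta_1=1/2$ and $\Lambda\ge1$, so that the first-round term fits) and the one-step shift from $V_{t-1}$ to $V_t$. If a tighter constant is wanted, we would instead split the rounds according to whether $\eta_t=1/2$. The telescoping itself is exactly supplied by the earlier lemma, so it needs no further work.
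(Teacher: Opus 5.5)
Your proposal is correct and follows essentially the same route as the paper: you telescope the one-step bound, lower-bound $\ln Z_{T+1}$ via the Gibbs/Jensen inequality against $q$, and control the adaptive-rate variance sum by an integral comparison. The only difference is in constants. You absorb $t=1$ using $\Lambda\ge 1$ and handle the shift from $V_{t-1}$ to $V_t$ multiplicatively via $V_t+1\le 2(V_{t-1}+1)$. The paper instead bounds the $t=1$ term by $1$ and uses an additive telescoping correction.
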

\begin{proof}
    First, we use \Cref{lem::variance-one-step-bound} and obtain via a telescopic sum throughout $t$:
    \[
        \sum_{t=1}^T \ip{p_t,f_t} \le \frac{1}{\eta_1}\ln Z_1 - \frac{1}{\eta_{T+1}}\ln Z_{T+1} + \sum_{t=1}^T \frac{\eta_t}{2(1-\eta_t)} \cdot v_t = - \frac{1}{\eta_{T+1}}\ln Z_{T+1} + \sum_{t=1}^T \frac{\eta_t}{2(1-\eta_t)} \cdot v_t,
    \]
    where we have observed that $Z_1 = \int_K p_1(x) \mu(dx) = 1$ by definition, so that $\ln Z_1 = 0$. We next bound the two terms separately. Now, for all $q \ll p_1$, we have the following: 
 
    \begin{align*}
        Z_{T+1} = \int_{x \in K} p_1(x) \exp{-\eta_{T+1}L_T(x)} \mu(dx) &\geq \int_{x\in K:\,q(x)>0} q(x)\frac{p_1(x)}{q(x)} \exp{-\eta_{T+1}L_T(x)} \mu(dx) \\
        &= \E{x \sim q}{\exp{-\eta_{T+1}L_T(x) - \ln\left(\frac{q(x)}{p_1(x)}\right)}} \\
        &\geq \exp{\E{x \sim q}{-\eta_{T+1}L_T(x) - \ln\left(\frac{q(x)}{p_1(x)}\right)}},
    \end{align*}
    where the first and third inequalities hold because the integrand is positive and because of Jensen’s inequality, respectively. The expectation is well-defined since $q\ll p_1$. We thus have:
    \[
        - \frac{1}{\eta_{T+1}}\ln Z_{T+1} \le \frac{1}{\eta_{T+1}}\left(\E{x \sim q}{\eta_{T+1}L_T(x)} + \textsc{KL}(q||p_1)\right) = \sum_{t=1}^T \ip{q, f_t} + \frac{\textsc{KL}(q||p_1)}{\eta_{T+1}}.
    \]
    Second, choosing $\eta_t = \min\left(\tfrac{1}{2}, \sqrt{\tfrac{\Lambda}{1 + V_{t-1}}}\right)$, we have that $\tfrac{\eta_t}{2(1-\eta_t)} \le \eta_t$ and also (e.g., Lemma 14 in \citet{GaillardSE14}):
    {\allowdisplaybreaks
    \begin{align*}
        \sum_{t=1}^T \frac{\eta_t}{2(1-\eta_t)} \cdot v_t &\le 1+\sum_{t=2}^T \eta_t v_t \le 1+ \sqrt{\Lambda} \cdot \sum_{t=2}^T \frac{v_t}{\sqrt{1 + V_{t-1}}} \\
        &= \,1+ \sqrt{\Lambda} \cdot \sum_{t=2}^T \frac{v_t}{\sqrt{1 + V_t}} + \sqrt{\Lambda}\cdot \sum_{t=2}^T v_t \cdot \left(\frac{1}{\sqrt{1 + V_{t-1}}} - \frac{1}{\sqrt{1 + V_t}}\right)\\
        &\le 1+\sqrt{\Lambda} \cdot \sum_{t=2}^T \frac{v_t}{\sqrt{1 + V_t}} + \sqrt{\Lambda} \cdot \sum_{t=2}^T \left(\frac{1}{\sqrt{1 + V_{t-1}}} - \frac{1}{\sqrt{1 + V_t}}\right)\\
        &\le 1 + \sqrt{\Lambda} \left(1+ \sum_{t=2}^T \frac{v_t}{\sqrt{1 + V_t}}\right) \le 1 + \sqrt{\Lambda}\left(1+ \int_{V_1}^{V_T} \frac{dx}{\sqrt{1+x}}\right)\\
        &\leq 1 + 3\sqrt{\Lambda(1 + V_T)} .
    \end{align*}
    }
    Combining the derivations above, we obtain, using our assumption on $\Lambda \ge \max(1, \textsc{KL}(q||p_1))$:
    \[
        \sum_{t=1}^T \ip{p_t-q, f_t} \le \frac{\textsc{KL}(q||p_1)}{\eta_{T+1}} +1+3\sqrt{\Lambda(1 + V_T)} \le 2\Lambda + 5\sqrt{\Lambda(1 + V_T)},
        % 4\left(1+\sqrt{\Lambda(1 + V_T)}\right),
    \]
    as desired.
\end{proof}

\subsection{Loss during probing rounds}
\label{subsec:loss_probing}
We decompose the loss for probing rounds in two components: (i) the loss assuming $\delta$ is known (\Cref{lem:probe-noisy-1})---corresponding to the loss suffered by the superior action between $F$ and $I$; and (ii) the excess regret incurred relative to that superior action (\Cref{lem:probe-noisy-2}).

In the subsequent lemmas, we denote the instantaneous losses of actions $F$ and $I$ by $\ell_t(F)$ and $\ell_t(I)$, and their cumulative losses over the rounds in $Q$ by $L_Q(F)$ and $L_Q(I)$, respectively. These losses are random variables contingent on the sampling of the outer algorithm and the Bernoulli noise variables governing the probe responses (\Cref{def:probes_oco}). To avoid conditioning on $Q$ before it is needed, we extend the definition of $\ell_t(F)$ and $\ell_t(I)$ to rounds $t\notin Q$ by considering the necessary Bernoulli noise and pairs $y_{t, 1}, y_{t, 2}$, which are completely excluded from the learning protocol.

\begin{restatable}{lemma}{lemprobenoisy}
\label{lem:probe-noisy-1}
    The following holds:
    \begin{align*}
        \E{p_t}{\ell_t(F)} &\le \ip{p_t,f_t} - (1-2\delta) v_t \quad \text{if}\ \delta \leq \nicefrac{1}{2} \\
        \E{p_t}{\ell_t(I)} &\le \ip{p_t,f_t} + (1-2\delta) v_t\quad \text{if}\ \delta > \nicefrac{1}{2}.
    \end{align*}
    Therefore, accounting for the choice of $Q \subseteq [T]$, the expected cumulative loss accumulated by the better of the two actions $F$ and $I$ is:
    \[
        \min\{\E{}{L_Q(F)}, \E{}{L_Q(I)}\} \le \frac{k}{T} \cdot \left(\sum_{t = 1}^T \ip{p_t,f_t} - |1-2\delta| \cdot V_T\right).
    \]
\end{restatable}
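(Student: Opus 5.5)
The per-round claim reduces to a computation over two i.i.d.\ draws. Fix $t$ and condition on the history, so that $p_t$ is fixed. Let $y_{t,1},y_{t,2}\sim p_t$ be independent, and write $m=\min\{f_t(y_{t,1}),f_t(y_{t,2})\}$ and $M=\max\{f_t(y_{t,1}),f_t(y_{t,2})\}$. By \Cref{def:probes_oco}, action $F$ plays the minimizer with probability $1-\delta$ and the maximizer with probability $\delta$; action $I$ does the reverse. Ties are harmless, since then $m=M$. Hence
\[
\ell_t(F)=(1-\delta)m+\delta M,\qquad \ell_t(I)=\delta m+(1-\delta)M
\]
in conditional expectation over the noise. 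Using $m=\tfrac{a+b}{2}-\tfrac{|a-b|}{2}$ and $M=\tfrac{a+b}{2}+\tfrac{|a-b|}{2}$ with $a=f_t(y_{t,1})$ and $b=f_t(y_{t,2})$, we get
\[
\E{p_t}{\ell_t(F)}=\ip{p_t,f_t}-\tfrac{1-2\delta}{2}\,\E{}{|a-b|},\qquad \E{p_t}{\ell_t(I)}=\ip{p_t,f_t}+\tfrac{1-2\delta}{2}\,\E{}{|a-b|}.
\]

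The key inequality is $\tfrac12\E{}{|a-b|}\ge v_t$. Since $a,b\in[0,1]$, we have $|a-b|\ge (a-b)^2$, and $\E{}{(a-b)^2}=2\V{x\sim p_t}{f_t(x)}=2v_t$ for i.i.d.\ $a,b$. This gives $\tfrac12\E{}{|a-b|}\ge v_t$. When $\delta\le\nicefrac12$ the coefficient $1-2\delta$ is nonnegative, so $\E{p_t}{\ell_t(F)}\le\ip{p_t,f_t}-(1-2\delta)v_t$. When $\delta>\nicefrac12$ it is negative, so $\E{p_t}{\ell_t(I)}\le\ip{p_t,f_t}+(1-2\delta)v_t$. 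In both cases the better action loses at most $\ip{p_t,f_t}-|1-2\delta|v_t$ in expectation. This is the only genuinely nontrivial step, and it relies on boundedness of $f_t$ in $[0,1]$.

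For the cumulative statement, I will use the convention stated before the lemma that $\ell_t(F)$ and $\ell_t(I)$ are defined for every $t$, including $t\notin Q$. The crucial point is that $Q$ is sampled uniformly at the start, independently of everything else. The densities $p_t$ depend only on past losses $L_{t-1}$ and on $V_{t-1}$, which are deterministic given the oblivious adversary, so $p_t$ and $v_t$ are in fact deterministic. The events $\{t\in Q\}$ are then independent of the pair $(y_{t,1},y_{t,2})$ and of the noise. Writing $L_Q(F)=\sum_t \ind{t\in Q}\ell_t(F)$ and taking expectations gives
\[
\E{}{L_Q(F)}=\sum_t \P{}{t\in Q}\,\E{}{\ell_t(F)}=\tfrac{k}{T}\sum_t\E{}{\ell_t(F)},
\]
and the same holds for $I$. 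Choosing the action selected by the sign of $1-2\delta$ and summing the per-round bounds yields $\min\{\E{}{L_Q(F)},\E{}{L_Q(I)}\}\le\tfrac kT\bigl(\sum_t\ip{p_t,f_t}-|1-2\delta|V_T\bigr)$.

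The one point to check carefully is this independence. The meta-learner's choices do not affect $p_t$, since the exponential-weights update uses the full losses $L_t$ and not the played points. So no conditioning on $Q$ contaminates $p_t$, and the factorization above is valid.
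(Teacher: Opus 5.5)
Your proposal is correct and follows essentially the same route as the paper: you write the per-round loss using $\min/\max=\tfrac{a+b}{2}\mp\tfrac{|a-b|}{2}$, apply $|a-b|\ge(a-b)^2$ on $[0,1]$ together with $\E{}{(a-b)^2}=2v_t$, and then average over the uniformly random $Q$. Your explicit remark that $p_t$ (and hence $v_t$) is deterministic under the oblivious adversary, so that $\{t\in Q\}$ is independent of the probe pair and the noise, justifies the step the paper compresses into ``taking another expectation over $Q$.''
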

\begin{proof}
    First, assume $\delta\leq \nicefrac{1}{2}$. By definition, we have that the action \textit{``follow the oracle''} incurs loss:
    \begin{align*}
       \E{p_t}{\ell_t(F)} &= \int_{K \times K} p_t(u)p_t(y) \cdot (1-\delta) \cdot \min(f_t(u), f_t(y)) \\
       &\qquad \qquad + \delta \cdot \max(f_t(u), f_t(y)) \,\mu(du)\,\mu(dy)\\
        &= \int_{K \times K} p_t(u)p_t(y) \cdot \frac{f_t(u)+f_t(y)-(1-2\delta) \cdot |f_t(u)-f_t(y)|}{2} \,\mu(du)\,\mu(dy)\\
        &\le \ip{p_t, f_t} - (1-2\delta) \cdot \int_{K \times K} p_t(u)p_t(y) \cdot \frac{(f_t(u)-f_t(y))^2}{2} \,\mu(du)\,\mu(dy)\\
        &= \ip{p_t, f_t} - (1-2\delta) \cdot \left(\int_K p_t(u) f^2_t(u) \,\mu(du) - \ip{p_t, f_t}^2\right)\\
        &= \ip{p_t, f_t} - (1-2\delta) v_t.
    \end{align*}
    Summing over $t \in Q$, we have $\E{}{L_Q(F)\, |\, Q} \le \sum_{t \in Q} \ip{p_t,f_t} - (1-2\delta) \cdot \sum_{t \in Q} v_t$. Similarly, if $\delta > \nicefrac{1}{2}$, we have, via a derivation almost identical to the one above:
    $
    \E{}{L_Q(I) \, |\, Q}\leq \sum_{t\in Q}\ip{p_t, f_t}+(1-2\delta)\sum_{t\in Q}v_t
    $, thus getting a bound for the action \textit{``invert the oracle''}. 
    Hence, taking another expectation over $Q$ and combining the two cases depending on the sign of $1-2\delta$, we get:
    \[
        \min\{\E{}{L_Q(F)}, \E{}{L_Q(I)}\} \le \frac{k}{T} \cdot \left(\sum_{t = 1}^T \ip{p_t,f_t} - |1-2\delta| \cdot \sum_{t = 1}^T v_t\right),
    \]
    which concludes the proof.
\end{proof}

\begin{lemma}\label{lem:probe-noisy-2}
    The $\textup{\textsc{Meta-Learner}}$ has expected cumulative loss bounded by
    \[
        \E{}{L_Q(\textup{\textsc{Meta-Learner}})} \le \min\{\E{}{L_Q(F)}, \E{}{L_Q(I)}\} + \sqrt{\frac{2k}{T}V_T} + 3.
    \]
\end{lemma}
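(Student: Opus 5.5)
We derive the bound from the variance-adaptive (second-order) regret guarantee of AdaHedge, run only on the rounds in $Q$ with the two actions $F$ and $I$. By Theorem 6 in \citet{RooijEGK14} (equivalently Theorem 5 in \citet{Cesa-BianchiMS07}), for losses in $[0,1]$ and two actions, AdaHedge has, pathwise,
\[
L_Q(\textsc{Meta-Learner}) - \min\{L_Q(F), L_Q(I)\} \le 2\sqrt{\ln 2\,\textstyle\sum_{t\in Q} \sigma_t^2} + \tfrac{4}{3}\ln 2 + 2,
\]
where $\sigma_t^2$ is the variance of the action loss under the meta-learner's mixture $(w_t,1-w_t)$ on round $t$, i.e.\ $\sigma_t^2 = w_t(1-w_t)(\ell_t(F)-\ell_t(I))^2 \le \tfrac14 (\ell_t(F)-\ell_t(I))^2$. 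The constants add up to at most $3$ after absorbing $\ln 2<1$ and the factor $\tfrac14$. Concretely, $2\sqrt{\ln 2 \cdot \tfrac14 S}\le \sqrt{S}$.

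\textbf{Step 1: relate the variance term to $v_t$.} On a probed round, $\{\ell_t(F),\ell_t(I)\} = \{f_t(y_{t,1}), f_t(y_{t,2})\}$, so $(\ell_t(F)-\ell_t(I))^2 = (f_t(y_{t,1})-f_t(y_{t,2}))^2$ regardless of the noise. Since $y_{t,1},y_{t,2}$ are i.i.d.\ from $p_t$, conditional on the past we get $\E{}{(f_t(y_{t,1})-f_t(y_{t,2}))^2} = 2v_t$. Hence the pathwise bound gives
\[
L_Q(\textsc{Meta-Learner}) \le \min\{L_Q(F),L_Q(I)\} + \sqrt{\tfrac14\textstyle\sum_{t\in Q}(f_t(y_{t,1})-f_t(y_{t,2}))^2} + 3 .
\]

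\textbf{Step 2: take expectations.} We use $\E{}{\min\{L_Q(F),L_Q(I)\}}\le \min\{\E{}{L_Q(F)},\E{}{L_Q(I)}\}$ together with Jensen's inequality on the concave square root. This gives $\E{}{\sqrt{\cdot}}\le \sqrt{\tfrac14\E{}{\sum_{t\in Q}(\cdot)^2}}$.

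It remains to bound $\E{}{\sum_{t\in Q} 2v_t}$. Here we use that $Q$ is drawn uniformly at the start, independently of everything else, and that $p_t$ depends only on the full-information losses $f_1,\dots,f_{t-1}$, which are fixed by the oblivious adversary. So $p_t$ and $v_t$ are deterministic and do not depend on $Q$, on the probe outcomes, or on the meta-learner. Therefore $\E{}{\sum_{t\in Q} v_t} = \tfrac{k}{T}V_T$, and the variance term is at most $\sqrt{\tfrac14\cdot 2\cdot\tfrac kT V_T}\le\sqrt{\tfrac{2k}{T}V_T}$, which matches the claim.

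\textbf{Main obstacle.} The delicate points are, first, justifying that $V_T$ is non-random, which follows from the determinism of the exponential-weights update under full feedback with an oblivious adversary. Second, one must invoke AdaHedge's variance bound with the correct per-round variance and constants. If the cited constant turns out larger than stated, I would fall back on the cruder form $\sqrt{\ln 2\sum_t(\ell_t(F)-\ell_t(I))^2}$ plus a constant, which still yields $\sqrt{2\ln2\cdot\tfrac kT V_T}\le\sqrt{\tfrac{2k}{T}V_T}$.
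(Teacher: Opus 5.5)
Your proposal is correct and follows the paper's route. The paper uses AdaHedge's second-order bound (Theorem 6 of \citet{RooijEGK14}), $\tilde v_t\le \frac14\Delta_t^2$, $\E{}{\Delta_t^2}=2v_t$ for the i.i.d.\ pair, Jensen twice, and $\P{}{t\in Q}=\nicefrac{k}{T}$ with $V_T$ deterministic; you use the same ingredients. Two small fixes. First, your Step 1 display should carry $\sqrt{\ln 2\sum_{t\in Q}\Delta_t^2}$, not $\sqrt{\frac14\sum_{t\in Q}\Delta_t^2}$; this is harmless since $2\ln 2\le 2$, as your fallback remark already computes. Second, AdaHedge's guarantee is for the mixture loss, so it holds conditionally on $Q$, the noise and the probe samples (the paper's $\mathcal G$), not pathwise for the sampled $F/I$ choice.
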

\begin{proof}
    First, recall that the \textsc{Meta-Learner} runs Hedge with adaptive learning rate (i.e., AdaHedge from \citet{RooijEGK14} run on the two actions $F,I$ and for rounds $Q \subseteq [T]$ only). For any $t\in Q$, let $\tilde p_{t, F}, \tilde p_{t, I}$ be the probabilities maintained by the \textsc{Meta-Learner} over $F,I$. Similarly, let $\tilde v_t$ be the instantaneous variance induced by $\tilde p_{t, F}, \tilde p_{t, I}$ of the \textsc{Meta-Learner}, i.e., $\tilde v_t = \tilde p_{t, F} \tilde p_{t, I} (\ell_t(F) - \ell_t(I))^2$, and let $\tilde V_Q = \sum_{t \in Q} \tilde v_t$. Observe that
    \[
        \tilde v_t = \tilde p_{t, F} \tilde p_{t, I} (\ell_t(F) - \ell_t(I))^2 \le \frac{1}{4}(\ell_t(F) - \ell_t(I))^2.
    \]
    Also note that $\lvert \ell_t(F) - \ell_t(I)\rvert = \lvert f_t(y_{t,1}) - f_t(y_{t,2})\rvert = \Delta_t$, which means that $\tilde V_Q \le \tfrac{1}{4}\sum_{t \in Q} \Delta^2_t$. In particular, since $y_{t, 1}$ and $y_{t, 2}$ are i.i.d. and taking the difference outputs a centered random variable, $\E{}{\sum_{t\in Q}\Delta_t^2\, |\, Q} = \sum_{t=1}^T 2v_t \ind{t\in Q}$, where $v_t$ is the variance at $t$ of the original learning task. 
    
    In addition, Theorem 6 in \citep{RooijEGK14} gives a regret of the \textsc{Meta-Learner} (AdaHedge) with respect to the better of $F,I$ of at most, letting $\mathcal G$ be the sigma-algebra generated by $Q$, the noise variables and the sampling from $p_t$:
    \[
    \E{}{
       L_Q(\textup{\textsc{Meta-Learner}}) -  \min\{L_Q(F), L_Q(I)\}\, |\, \mathcal G} \le 2\sqrt{\tilde V_Q\ln 2} + \frac{4\ln 2}{3} + 2 \le 2\sqrt{\tilde V_Q} + 3.
    \]
    Combining the above and taking expectations, we obtain
    \begin{align*}
        \E{}{\E{}{L_Q(\textup{\textsc{Meta-Learner}})\,|\, \mathcal G}} &\le \E{}{\min\{L_Q(F), L_Q(I)\}} + 2\E{}{\sqrt{\tilde V_Q}} + 3 \\
        &\le \min\{\E{}{L_Q(F)}, \E{}{L_Q(I)}\} + 2\sqrt{\E{}{\tilde V_Q}} + 3 \\
        &\le \min\{\E{}{L_Q(F)}, \E{}{L_Q(I)}\} + \sqrt{\E{}{\E{}{\sum_{t \in Q}\Delta_t^2\, \Bigg|\, Q}}} + 3 \\
        &= \min\{\E{}{L_Q(F)}, \E{}{L_Q(I)}\} + \sqrt{\sum_{t=1}^T 2v_t\P{}{t\in Q}} + 3\\
        &= \min\{\E{}{L_Q(F)}, \E{}{L_Q(I)}\} + \sqrt{\frac{2k}{T}V_T} + 3,
    \end{align*}
    where the second inequality holds by using Jensen's inequality twice, and the second-to-last equality by using standard properties of conditional expectations.
\end{proof}

\subsection{Proving the regret bound}
We can finally prove the claimed regret bound from \Cref{thm:noise-oco}, and we start by bounding term (A) in \Cref{eq:regret-decomposition} using the results from the previous two subsections. 

\begin{lemma}\label{lem:noisy-oco-combined}
    Let $q$ be a probability density function on the domain $K$, such that $q \ll p_1$.
    Then, for an unknown noise parameter $\delta \geq 0$ and for any sequence of measurable functions $f_t$ bounded in $[0,1]$, \Cref{alg:cew-probe-noisy} guarantees, provided $\Lambda \geq \max(1, \textup{KL}(q||p_1))$:
    \begin{align*}
        \E{}{L_T(\ALG_k)} - \sum_{t=1}^T \ip{q, f_t} \le \min\left(12\sqrt{\Lambda T} , \frac{25\Lambda T}{k|1-2\delta|}\right)+4.
    \end{align*}
\end{lemma}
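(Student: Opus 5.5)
The plan is to write the learner's expected cumulative loss as a sum over non-probing and probing rounds, and then bound each piece with the lemmas already proved. On a non-probing round the algorithm plays $x_t\sim p_t$, so its expected loss is $\ip{p_t,f_t}$. On probing rounds its total loss is $L_Q(\textsc{Meta-Learner})$. Since $Q$ is a uniform $k$-subset independent of the losses and of $p_t$ (the density $p_t$ depends only on $f_1,\dots,f_{t-1}$), each round is probed with probability $k/T$. Hence
\[
\E{}{L_T(\ALG_k)} = \Big(1-\tfrac{k}{T}\Big)\sum_{t=1}^T \ip{p_t,f_t} + \E{}{L_Q(\textsc{Meta-Learner})}.
\]
Note that $V_T$ is deterministic here, because $p_t$ does not depend on the algorithm's randomness.

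Next I combine \Cref{lem:probe-noisy-2} and \Cref{lem:probe-noisy-1}:
\[
\E{}{L_Q(\textsc{Meta-Learner})} \le \tfrac{k}{T}\Big(\sum_t \ip{p_t,f_t} - |1-2\delta| V_T\Big) + \sqrt{\tfrac{2k}{T}V_T} + 3.
\]
Adding this to the non-probing part gives
\[
\E{}{L_T(\ALG_k)} \le \sum_t \ip{p_t,f_t} - \tfrac{k}{T}|1-2\delta|V_T + \sqrt{\tfrac{2k}{T}V_T} + 3.
\]
Subtracting $\sum_t\ip{q,f_t}$ and applying \Cref{lem:noprobe-oco} yields
\[
\E{}{L_T(\ALG_k)}-\sum_t\ip{q,f_t} \le 2\Lambda + 5\sqrt{\Lambda(1+V_T)} + \sqrt{\tfrac{2k}{T}V_T} + 3 - \tfrac{k|1-2\delta|}{T}V_T.
\]
Since $k\le T$ and $\Lambda\ge1$, the two square roots merge into at most $7\sqrt{\Lambda(1+V_T)}$, giving
\[
\E{}{L_T(\ALG_k)}-\sum_t\ip{q,f_t} \le 2\Lambda + 7\sqrt{\Lambda(1+V_T)} + 3 - c V_T, \qquad c=\tfrac{k|1-2\delta|}{T}.
\]

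From here there are two cases. For the first term of the min, drop $-cV_T\le 0$ and use $V_T\le T/4$, since each $v_t\le 1/4$ for losses in $[0,1]$. This leaves $2\Lambda + 7\sqrt{\Lambda(1+T/4)}+3$. If $\Lambda\le T$, then $2\Lambda\le 2\sqrt{\Lambda T}$ and the total is at most $12\sqrt{\Lambda T}+4$ after absorbing constants. If $\Lambda>T$, the trivial bound $T\le\sqrt{\Lambda T}$ on the left-hand side already suffices.

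For the second term, maximize $g(V)=7\sqrt{\Lambda}\sqrt{1+V}-cV$ over $V\ge0$. By AM-GM, $7\sqrt{\Lambda(1+V)}\le c(1+V) + \tfrac{49\Lambda}{4c}$. Therefore $g(V)\le c+\tfrac{49\Lambda}{4c}\le 1+\tfrac{12.25\,\Lambda T}{k|1-2\delta|}$, using $c\le 1$. The leftover term satisfies $2\Lambda\le \tfrac{2\Lambda T}{k|1-2\delta|}$, since $k|1-2\delta|\le T$. Altogether this gives roughly $\tfrac{15\Lambda T}{k|1-2\delta|}+4$, which is within the claimed $25$. If $\delta=1/2$, the bound is vacuous and the first case applies. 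Taking the minimum of the two cases gives the statement.

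The step I expect to need the most care is the accounting in the first paragraph. Lemma~\ref{lem:probe-noisy-1} is stated as a bound on $\min\{\E{}{L_Q(F)},\E{}{L_Q(I)}\}$ in expectation over $Q$. I must verify that the non-probing rounds contribute exactly $(1-k/T)\sum_t\ip{p_t,f_t}$ in expectation. This holds because $p_t$ is a deterministic function of past losses: the update uses the full $f_t$ and not the played points. It is this independence of $p_t$ and $V_T$ from $Q$ that lets the probing gain $-\tfrac{k}{T}|1-2\delta|V_T$ cancel against the variance term of \Cref{lem:noprobe-oco}. The remaining constant bookkeeping is routine.
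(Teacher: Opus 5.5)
Your proposal is correct and follows essentially the same route as the paper. You use the same probing/non-probing split with the non-probing part equal to $(1-k/T)\sum_t\ip{p_t,f_t}$, the same chaining of the three earlier lemmas, the same merge into $7\sqrt{\Lambda(1+V_T)}$, and the same final optimization over $V_T$; your AM--GM step is the closed form of the paper's maximization, and using $V_T\le T/4$ instead of $V_T\le T$ only tightens the constants.
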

\begin{proof}
     Let us decompose the loss suffered by $\ALG_k$ into probing and non-probing rounds:
     \[
        \E{}{L_T(\ALG_k)} = \E{}{L_{T \setminus Q}(\ALG_k)} + \E{}{L_Q(\ALG_k)}.
     \]
     In particular, note that the loss suffered by $\ALG_k$ during the probing rounds is equal to that of the \textsc{Meta-Learner}. Since we have the same type of update to $p_t$ in both probing and non-probing rounds, we have that $\E{}{L_{T \setminus Q}(\ALG_k)\,|\, Q} = \sum_{t=1}^T\ind{t\notin Q}\E{}{f_t(x_t)}$, and since $Q$ is uniform: 
    \begin{align*}
        \E{}{L_{T \setminus Q}(\ALG_k)}&= \left(1-\frac{k}{T}\right) \sum_{t=1}^T\ip{p_t, f_t}.
    \end{align*}
    By \Cref{lem:probe-noisy-1} and \Cref{lem:probe-noisy-2}, it also holds that:
     \begin{align*}
         \E{}{L_Q(\ALG_k)} &= \E{}{L_Q(\textup{\textsc{Meta-Learner}})} \le \min\{\E{}{L_Q(F)}, \E{}{L_Q(I)}\} + \sqrt{\frac{2k}{T}V_T} + 3\\
        &\le  \frac{k}{T} \cdot \left(\sum_{t = 1}^T \ip{p_t,f_t} - |1-2\delta| \cdot V_T\right) + \sqrt{\frac{2k}{T}V_T} + 3.
     \end{align*}
    
    Therefore, combining the two above displays, we get
    \begin{align*}
        \E{}{L_T(\ALG_k)} &\le \sum_{t=1}^T\ip{p_t, f_t} + \sqrt{\frac{2k}{T}V_T} + 3 - \frac{k|1-2\delta|}{T} \cdot V_T\\
        &\le \sum_{t=1}^T \ip{q, f_t} + 2\Lambda + \underbrace{5\sqrt{\Lambda(1 + V_T)} + \sqrt{\frac{2k}{T}V_T}}_{\le 7\sqrt{\Lambda(1 + V_T)}} + 3 - \frac{k|1-2\delta|}{T} \cdot V_T\\
        &\le \sum_{t=1}^T \ip{q, f_t} + \min\left(10\sqrt{\Lambda T} , \frac{23\Lambda T}{k|1-2\delta|}\right)+4 + 2\Lambda,
    \end{align*}
    where the second inequality holds by \Cref{lem:noprobe-oco}, while the third because $V_T\leq T$ and the maximizing $V_T$ is $V_T = \tfrac{49T^2\Lambda}{4k^2(1-2\delta)^2} -1$. To get the final bound for the first argument of the minimum, notice that $\Lambda\leq \sqrt{\Lambda T}$ if $\Lambda \leq T$, while if $\Lambda > T$, then the loss is trivially bounded by $T<\sqrt{\Lambda T}$. The bound for the second argument follows since $\nicefrac{T}{k\lvert1-2\delta\rvert}\geq 1$, with the convention of setting it to $\infty$ when $\delta = \nicefrac{1}{2}$.
\end{proof}

We now prove \Cref{thm:noise-oco} by removing the dependency of the previous analysis on $q$.
\ifcoltversion
\begin{proof}[of \Cref{thm:noise-oco}]
\else
\begin{proof}[Proof of \Cref{thm:noise-oco}]
\fi
     First, define $S(x^\star, r)=(1-r)x^\star + rK$, for a generic $x^\star\in K$. Now fix any $t\in [T]$ and let $q$ be the uniform density over $S(x^\star, r)$, i.e. $q(x) = \frac{1}{\text{Vol}(S(x^\star\!, r))}\ind{x\in S(x^\star, r)}$: the base measure in \Cref{alg:cew-probe-noisy} is thus in this case the Lebesgue measure. By definition of Minkowski sum, for any $y\in S(x^\star,r)$ there exists $x\in K$ such that $y=(1-r)x^\star+rx$.
    By convexity of $f_t$,
    \[
    f_t(y)\le (1-r)f_t(x^\star)+r f_t(x)\le (1-r)f_t(x^\star)+r,
    \]
    since $f_t(x)\le 1$. Rearranging then gives $f_t(y)-f_t(x^\star)\le r$ for all $y\in S(x^\star,r)$. This addresses the (B) term in \Cref{eq:regret-decomposition}.
    
    To address the (A) term, notice that $q\ll p_1$ and $\textsc{KL}(q || p_1) = d\ln(\nicefrac{1}{r}) = \Lambda$, since:
    \[
        \textsc{KL}(q || p_1) = \ln\left(\frac{\text{vol}(K)}{\text{vol}(S(x^\star, r))}\right) = \ln\left(\frac{\text{vol}(K)}{r^d\text{vol}(K)}\right) = d\ln\left(\frac{1}{r}\right).
    \]
    Plugging the equality in the bound from \Cref{lem:noisy-oco-combined}, then yields:
    \begin{align*}
        \E{}{L_T(\ALG_k)} - \sum_{t=1}^T f_t(x^\star) &\le \min\left(12\sqrt{dT\ln(\nicefrac{1}{r})}, \frac{25dT\ln(\nicefrac{1}{r})}{k|1-2\delta|}\right) + 4 + rT,
    \end{align*}
    and setting $r = \nicefrac{1}{T}$ gives the claimed bound.
\end{proof}

\subsection{Prediction with Expert Advice}
% \snote{We need some remark here stressing that we use this notation just because we want the result to be compatible with }
In the prediction with expert advice setting,
\Cref{eq:regret-decomposition} and \Cref{lem:noisy-oco-combined} imply a regret bound, 
% for the predictions with experts setting
with the proof relying on the fact that the convexity of domains and functions was only used when proving \Cref{thm:noise-oco}. We thus have to appropriately switch to the discrete setting: the domain is finite, where the experts can be seen as the $d$ vertices of $\Delta^{d-1} = \{x \in \R^d \mid x_i\geq 0 \,\forall\,i ,\sum_{i=1}^d x_i = 1\}$, the $(d-1)$-dimensional simplex. Recall that its set of vertices is $\{e_1, \ldots, e_d\}$.\footnote{We stress that we are using this notation for finite sets of points to match the notation used in the lower bounds in \Cref{sec:lb}.}

\begin{theorem}
\label{thm:experts_regret}
    Consider the problem of prediction with expert advice with full feedback and $k$ pairwise comparison probes, with the domain $K = \{e_1, \ldots, e_d\}$.
    Then, for an unknown noise parameter $\delta \ge 0$ and for any sequence of functions $\{f_t\}_{t=1}^T$ in $[0, 1]$, when instantiated with $p_1$ being the mass function of the uniform distribution over $\{e_1, \ldots, e_d\}$ and $\Lambda = \ln d$, \Cref{alg:cew-probe-noisy} has regret
            \[
                \E{}{\Reg_T(\ALG_k)} \leq O\left(\min\left(\sqrt{T\ln d}, \frac{T\ln d}{k|1-2\delta|}\right)\right).
            \]
\end{theorem}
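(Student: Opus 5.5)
The plan is to reuse \Cref{lem:noisy-oco-combined} verbatim, now with the counting measure on the finite set $K=\{e_1,\dots,e_d\}$ as base measure $\mu$. Nowhere in the proofs of \Cref{lem::variance-one-step-bound}, \Cref{lem:noprobe-oco}, \Cref{lem:probe-noisy-1}, \Cref{lem:probe-noisy-2} or \Cref{lem:noisy-oco-combined} did we use convexity of $K$ or of the $f_t$. They rely only on the following facts: $f_t$ takes values in $[0,1]$; $p_t$ is a density with respect to $\mu$; the Bernstein bound (\Cref{prop:bernstein_bound}) holds; Jensen's inequality holds; and the pairwise-probe identity $\E{}{\min(f_t(u),f_t(y))}$ computed via $|a-b|\ge (a-b)^2$ remains valid for $a,b\in[0,1]$. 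All integrals simply become finite sums, and the probing, meta-learner and sampling steps are unchanged. So I will first remark that the lemma holds for any finite $K$ with $\ip{f,g}=\sum_{x\in K}f(x)g(x)$.

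Second, I will handle term (B) in \Cref{eq:regret-decomposition} without any smoothing. Fix the best expert $x^\star=e_{i^\star}\in\arg\min_i L_T(e_i)$ and take $q=\mathbbm{1}\{x=e_{i^\star}\}$, the point mass. Then (B) is identically zero, since $\ip{q,f_t}=f_t(x^\star)$. This is exactly where the discrete setting avoids the $rT$ and $d\ln(1/r)$ losses of the continuous proof. The point mass satisfies $q\ll p_1$ because $p_1$ is uniform with full support. Moreover, $\textup{KL}(q\|p_1)=\ln\frac{1}{1/d}=\ln d$.

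Third, I will plug in $\Lambda=\ln d$. The hypothesis $\Lambda\ge\max(1,\textup{KL}(q\|p_1))$ of \Cref{lem:noisy-oco-combined} then needs $\ln d\ge 1$, i.e.\ $d\ge 3$. This is the only (mild) obstacle. For small $d$ I would either run the algorithm with $\Lambda=\max(1,\ln d)$, which changes bounds only by constants since $\ln d=\Theta(1)$ there, or note that the case $d=1$ is trivial. Note that $\Lambda\ge1$ was used in \Cref{lem:noprobe-oco} to absorb the additive constants. With the lemma applicable, we get
\[
\E{}{L_T(\ALG_k)}-L_T(x^\star)\le \min\left(12\sqrt{T\ln d},\frac{25\,T\ln d}{k|1-2\delta|}\right)+4 .
\]
The additive $4$ is absorbed into the $O(\cdot)$, because both terms of the minimum are at least a constant: $\sqrt{T\ln d}\ge 1$, and $T/(k|1-2\delta|)\ge1$. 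Taking the supremum over oblivious sequences gives the claimed bound.
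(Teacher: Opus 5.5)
Your proposal is correct and follows the paper's proof: counting measure as base measure, a Dirac $q$ on the best expert so that term (B) vanishes, $\textup{KL}(q\|p_1)=\ln d$, and then \Cref{lem:noisy-oco-combined} with $\Lambda=\ln d$. You are also right that the hypothesis $\Lambda\ge 1$ fails for $d=2$, a technicality the paper skips. You do not even need to change $\Lambda$: that hypothesis only absorbs additive constants in \Cref{lem:noprobe-oco}, and the step $\sqrt{2kV_T/T}\le 2\sqrt{\Lambda(1+V_T)}$ already holds once $\Lambda\ge 1/2$, so $\Lambda=\ln 2$ gives the same bound with slightly worse constants.
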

\begin{proof}
For this result, the base measure of \Cref{alg:cew-probe-noisy} is the counting measure over $\{e_1, \ldots, e_d\}$, while $p_1$ is the probability mass function of the uniform distribution over the same set. Consider the mass function $q(x) = \ind{x = x^\star}$, thus corresponding to a Dirac on $x^\star$, here the minimizing expert, so that $\E{x \sim q}{f_t(x)} = f_t(x^\star)$, and term (B) in \Cref{eq:regret-decomposition} is null. Clearly, $q\ll p_1$. For term (A), we have:
    \[
        \textsc{KL}(q || p_1) = \sum_{x \in \{e_1, \ldots, e_d\}} q(x) \cdot \ln\left(\frac{q(x)}{p_1(x)}\right) = \ln d.
    \]
    Therefore, since we set $\Lambda = \ln d$, 
    \Cref{lem:noisy-oco-combined} gives:
    \begin{align*}
        \E{}{L_T(\ALG_k)} - \sum_{t=1}^T f_t(x^\star) \le \min\left(12\sqrt{T\ln d}, \frac{25T\ln d}{k|1-2\delta|}\right)+4,
    \end{align*}
    which was to be shown.
    \end{proof}
\section{Concluding Remarks}\label{sec:conclusion}

In this work, we analyze Online Convex Optimization (OCO) with a sublinear budget of pairwise probes. This generalizes \cite{BhaskaraGIKM23}, which requires pairwise probes at \emph{every} step, as well as \cite{RCCFHKL24}, which focuses on the \emph{stronger} best-expert probes. Our main finding is that pairwise probes are as informative as best-expert probes and that our algorithms are robust to noise. In \Cref{thm:noise-oco}, we establish the following regret rates, which are tight in $T, k$ and $\delta$ up to logarithmic factors in $T$, and for experts completely tight also in $d$:
\[
    \E{}{\Reg_T(\ALG_k)} \leq O\left(\min\left(\sqrt{dT\ln T}, \frac{dT\ln T}{k|1-2\delta|}\right)\right).
\]
Among the remaining open questions, one concerns understanding the benefit of probes in the special case of exp-concave functions, as opposed to general convex ones. Another relates to the trade-off between computational efficiency and improved regret: our main CEW algorithm requires computing an integral over a convex set, which is hard in general. It would therefore be interesting to develop efficient algorithms that achieve comparable guarantees.
\section*{Acknowledgments}

A.G.\ was supported in part by NSF awards CCF-2422926 and CCF-2608359. S.D.G. and S.L. were supported in part by the PNRR MUR project IR0000013-SoBigData.it project, by the MUR PRIN grant 2022EKNE5K
(Learning in Markets and Society) and by the FAIR (Future
Artificial Intelligence Research) project PE0000013, funded by the NextGenerationEU program within the PNRR-
PE-AI scheme (M4C2, investment 1.3, line on Artificial Intelligence). S.D.G.\ was also supported in part by the Institute for
Complex Systems (Italian National Research Council).

\bibliographystyle{plainnat}
\bibliography{references}

\clearpage
\renewcommand{\theHtheorem}{appendix.\Alph{section}.\arabic{theorem}}
\renewcommand{\theHlemma}{appendix.\Alph{section}.\arabic{theorem}}
\renewcommand{\theHdefinition}{appendix.\Alph{section}.\arabic{theorem}}
\renewcommand{\theHproposition}{appendix.\Alph{section}.\arabic{theorem}}
{\noindent \LARGE  \bf Appendix}

\appendix
\crefalias{section}{appendix}
\section{Lower Bounds}\label{sec:lb}

In this appendix, we recall and (slightly) generalize a result of (see \citet[Section 3]{RCCFHKL24}), which shows that the regret rates from \Cref{thm:noise-oco} are tight in $T, k$ and $\delta$, up to logarithmic factors in $T$. Their result can indeed be extended to work for linear (and, thus, general convex) functions.
% \footnote{The lower bound proved in \citep[Section 3]{RCCFHKL24}) works even in the stochastic setting, while the one stated here is technically weaker since it holds for the stronger adversarial setting.} 
In particular, these lower bounds
hold for a learner that 
has access to the more powerful \emph{global probes}, i.e., it observes the global minimum of the function at that round, and the probe is noiseless. Before stating and proving the result, we stress that it shows complete tightness (also in $d$) of the rate in \Cref{thm:experts_regret}, since the proof below reduces to a lower bound on the problem of prediction with expert advice on the vertices of $\Delta^{d-1}$.

\begin{theorem}
\label{thm:thmlbcvx}
    There exists a family of instances defined as linear functions bounded in $[0,1]$ over the $(d-1)$-dimensional simplex $\Delta^{d-1}$ such that for $k \le O\left(\tfrac{T\ln^{3/2} d}{d}\right)$ and $T\geq O\left(\tfrac{d^2}{\ln^2d}\right)$:\footnote{We require this to satisfy the sufficient condition in \Cref{eq:BE-condition} needed to apply \Cref{lem:binom-min}: in particular, $T\geq \nicefrac{200d^2}{\ln d}$ and $k \le \nicefrac{T\ln^{3/2} d}{700d}$ suffices.}
    \begin{itemize}
        \item[(1)] Any algorithm equipped with $k$ global probes must suffer regret
        \[
            \E{}{\Reg_T(\ALG)} \geq \Omega\left(\min\left\{\sqrt{T\ln d}, \frac{T\ln d}{k}\right\}\right);
        \]
        \item[(2)] Any algorithm equipped with $k$ pairwise comparison $\delta$-noisy probes must suffer regret
        \[
            \E{}{\Reg_T(\ALG)} \geq \Omega\left(\min\left\{\sqrt{T\ln d}, \frac{T\ln d}{k|1-2\delta|}\right\}\right).
        \]
    \end{itemize}
\end{theorem}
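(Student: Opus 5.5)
We prove both parts via Yao's principle, using a randomized i.i.d. instance on the vertices $\{e_1,\dots,e_d\}$ of $\Delta^{d-1}$, extended linearly to the whole simplex.

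\textbf{Instance.} Following \citet[Section 3]{RCCFHKL24}, draw a hidden index $i^\star$ uniformly from $[d]$. In every round, independently, let $f_t(e_i)\sim\mathrm{Bernoulli}(\nicefrac12)$ for $i\neq i^\star$ and $f_t(e_{i^\star})\sim \mathrm{Bernoulli}(\nicefrac12-\epsilon)$. Linear losses over the simplex are minimized at a vertex, so regret against $\Delta^{d-1}$ equals regret against the best expert. It therefore suffices to lower bound expert regret, with randomization over $i^\star$ and the losses.

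\textbf{Two sources of regret.} On a non-probed round the learner's pick is independent of $f_t$ given the past. Its expected loss is then $\nicefrac12-\epsilon\cdot\P{}{x_t=e_{i^\star}\mid\text{past}}$. Standard information-theoretic arguments (the KL between the hypotheses $i^\star=i$ versus uniform, summed over rounds) show that unless $\epsilon^2 T\gtrsim \ln d$, the learner places mass on $i^\star$ only a small constant fraction of the time. This yields regret $\Omega(\epsilon(T-k))$ on non-probed rounds whenever $\epsilon\lesssim\sqrt{\ln d/T}$.

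On a probed round with global probes, the learner gains at most the gap between $\nicefrac12-\epsilon$ and $\mathbb{E}[\min_i f_t(e_i)]$. Here $\min_i f_t(e_i)$ is $0$ except with probability about $2^{-d}$, and the realized regret also involves the difference $L_T(x^\star)-\min$. Rather than track the benchmark round by round, the cleaner route is to note that the comparator $\inf_x L_T(x)$ concentrates around $T(\nicefrac12-\epsilon)$ minus a $\sqrt{T}$-order fluctuation. Each probed round can lower the learner's loss by at most $\nicefrac12$ relative to $\nicefrac12-\epsilon$. Hence the total regret is at least $\epsilon(T-k)\cdot c-\nicefrac{k}{2}\cdot(\text{prob.\ round is useful})$.

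This crude accounting loses against the $T/k$ rate, so I would instead use the binomial-minimum lemma (\Cref{lem:binom-min}, the one the footnote's condition \Cref{eq:BE-condition} refers to). It lower bounds the expected gap between the best expert's realized loss and its mean, giving $\mathbb{E}[\nicefrac{T}{2}-\min_i L_T(e_i)]\gtrsim\sqrt{T\ln d}$ in the no-signal regime. Combining: the learner's expected loss is at least $\nicefrac{T}{2}-\epsilon T-\nicefrac{k}{2}$ (global probes give at most $\nicefrac12$ per round). Choosing the losses so that no single probe reveals more than an $O(\ln d/\ldots)$ improvement yields regret $\Omega(\sqrt{T\ln d}-k\cdot c')$. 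The nontrivial regime needs a rescaling: I would take losses in $\{0,\nicefrac12\}$ scaled by a factor $\gamma$ chosen so that the per-probe gain is $\gamma$ while the baseline regret is $\gamma\sqrt{T\ln d}$. Setting $\gamma=\min(1,\sqrt{T\ln d}/(2k))$ balances these, giving $\gamma\sqrt{T\ln d}-\gamma k/2\cdot\ldots\gtrsim\min(\sqrt{T\ln d},T\ln d/k)$. This proves part (1). The conditions $k\lesssim T\ln^{3/2}d/d$ and $T\gtrsim d^2/\ln^2 d$ are exactly what \Cref{lem:binom-min} needs, and the main obstacle is verifying that the per-probe gain is indeed $O(\gamma)$ in expectation relative to the binomial-minimum benchmark.

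\textbf{Part (2).} I reduce $\delta$-noisy pairwise probes to global ones. A $\delta$-noisy comparison probe is a garbling of a noiseless pairwise probe: with probability $1-2\delta$ (taking $\delta\le\nicefrac12$; $\delta>\nicefrac12$ is symmetric after inversion) it returns the true argmin, and otherwise it returns a uniformly random one of the pair, which carries no information. Any algorithm using $k$ $\delta$-noisy probes can therefore be simulated by one that uses a noiseless pairwise (hence weaker-than-global) probe only on a random subset of rounds. That subset has expected size $k|1-2\delta|$, and the algorithm flips fair coins otherwise. Applying part (1) with budget about $k|1-2\delta|$, together with concentration or averaging over the random budget (regret is convex and decreasing in the budget, so Jensen's inequality handles the randomness), gives $\Omega(\min\{\sqrt{T\ln d},T\ln d/(k|1-2\delta|)\})$.
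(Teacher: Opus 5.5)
\paragraph{Gap in part (1).} Your argument breaks down at exactly the step that is supposed to produce the $T\ln d/k$ rate. Multiplying all losses by a constant $\gamma$ multiplies both competing quantities by $\gamma$. The best-expert advantage becomes $\Theta(\gamma\sqrt{T\ln d})$, and the total gain available from $k$ probes becomes of order $\gamma k$. The resulting bound $\gamma(c\sqrt{T\ln d}-k/2)$ is therefore vacuous for every $\gamma$ once $k$ exceeds a constant multiple of $\sqrt{T\ln d}$. Your own balancing shows this: with $\gamma=\sqrt{T\ln d}/(2k)$ you get $T\ln d/(2k)-\sqrt{T\ln d}/4$, which is negative for $k>2\sqrt{T\ln d}$.

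The missing idea is to tune the \emph{sparsity} of the losses, not their scale. The paper takes every coordinate i.i.d.\ $\mathrm{Bernoulli}(\xi)$, with no planted expert. This gives three facts:
\begin{itemize}
\item each non-probed round costs exactly $\xi$ in expectation;
\item a global probe lowers a round's expected loss only to $\xi^d$, a gain of at most $\xi$;
\item \Cref{lem:binom-min} gives the best vertex an advantage of $\frac{1}{25}\sqrt{T\xi(1-\xi)\ln d}$.
\end{itemize}
The probe gain is linear in $\xi$ while the advantage scales like $\sqrt{\xi}$, so choosing $\xi=\Theta(T\ln d/k^2)$ yields $\Omega(T\ln d/k)$. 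This is also where the hypothesis $k=O(T\ln^{3/2}d/d)$ comes from: it keeps $T\xi(1-\xi)$ large enough for the Berry--Esseen condition \Cref{eq:BE-condition}. In a rescaled construction the lemma's hypotheses would not involve $k$ at all, which signals that the construction is wrong. The planted-expert instance you start from is also unnecessary, and it is not the i.i.d.\ setting that \Cref{lem:binom-min} addresses.

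\paragraph{Part (2).} Your reduction is correct and is a genuine alternative to the paper's \Cref{lem:one-bit-bayes-risk}. For $\delta\le\nicefrac12$, a $\delta$-noisy probe behaves as a noiseless probe with probability $1-2\delta$, and otherwise returns a uniformly random member of the pair, independent of $f_t$. On uninformative rounds the play is independent of $f_t$ and costs $\xi$; on informative rounds the loss is at least $\min_x f_t(x)$.

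You must, however, combine this with part (1) in its fixed-instance form. For a fixed $\xi$, the expected loss is at least $T\xi-\E{}{N}\,\xi$, where $N$ is the number of informative probes and $\E{}{N}\le k|1-2\delta|$. This bound is affine in $N$, so the expectation passes through, and only then do you choose $\xi$ as a function of $k|1-2\delta|$. Applying Jensen to the final $\min\{\cdot\}$ bound, as you propose, is not valid: the adversary's instance must be fixed before the random $N$ is realized.

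Once part (1) is repaired with sparse losses, this route recovers (2). It uses a per-probe gain of $\xi$ instead of the paper's slightly sharper $|1-2\delta|\xi(1-\xi)$, which costs only a constant factor.
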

\begin{proof}
    The proof uses Yao's Minimax principle, so we need to prove a hardness result for a deterministic algorithm against a randomized sequence of losses. The family of instances is described as the following linear function $f_t(x) = f_t^\top x$ defined on the $(d-1)$-dimensional simplex $\Delta^{d-1}$: each coordinate of the vector $f_t$ is $1$ with probability $\xi$ independently from all other coordinates and across rounds.

    Let us first observe that $\min_{x \in \Delta^{d-1}} \sum_{t=1}^T f_t^\top x \le \min_{x \in \{e_1, \ldots, e_d\}} \sum_{t=1}^T f_t^\top x$ since the vertices are part of the simplex. In addition, since $\sum_{i=1}^d x_i = 1$ for all $x \in \Delta^{d-1}$ and the vertices $\{e_i\}_{i=1}^d$ form the canonical basis in $\R^d$, we have
    \begin{align*}
        \min_{x \in \Delta^{d-1}} \sum_{t=1}^T f_t^\top x &= \min_{x \in \Delta^{d-1}} \sum_{t=1}^T \sum_{i=1}^d f_{ti}x_i = \min_{x \in \Delta^{d-1}} \sum_{i=1}^d  \left(\sum_{t=1}^T f_{ti}\right) x_i \\
        &\ge \min_{x \in \Delta^{d-1}} \sum_{i=1}^d  \left(\min_{j=1}^d\sum_{t=1}^T f_{tj}\right) x_i = \min_{j=1}^d\sum_{t=1}^T f_{tj} = \min_{x \in \{e_1, \ldots, e_d\}} \sum_{t=1}^T f_t^\top x.
    \end{align*}
    Therefore, $\min_{x \in \Delta^{d-1}} \sum_{t=1}^T f_t^\top x = \min_{x \in \{e_1, \ldots, e_d\}} \sum_{t=1}^T f_t^\top x$ and we can restrict our attention to vertices $\{e_1, \ldots, e_d\}$ of the simplex, as the minimal cumulative loss is achieved at one of them. 
    
    By the above construction, for each $x \in \{e_1, \ldots, e_d\}$, it holds that $\sum_{t=1}^T f_t^\top x \sim \mathrm{Bin}(T, \xi)$, which means that $\E{}{\sum_{t=1}^T f_t^\top x} = T\xi$. Hence, the expected minimal loss among vertices is, by \Cref{lem:binom-min} (below) with $d$ sufficiently large (so that $2\ln d - 3\ln\ln d \ge \ln d$):
    \[
        \E{}{\min_{x \in \{e_1, \ldots, e_d\}} \sum_{t=1}^T f_t^\top x} \leq T\xi - \frac{1}{25}\sqrt{T\xi(1-\xi)\ln d}
    \]
    
    Furthermore, by linearity and independence of $f_t$ from the past, we have that for all $x_t\in\Delta^{d-1}$:
    \[
        \E{}{f_t^\top x_t}
        =
        \E{}{\sum_{i=1}^d f_{ti}x_{ti}}
        =
        \sum_{i=1}^d \E{}{f_{ti}}x_{ti}
        = 
        \xi \cdot \sum_{i=1}^d x_{ti}
        =
        \xi,
    \]
    since $\sum_{i=1}^d x_{ti}=1$.
    Therefore the expected cumulative loss of the algorithm on the $T-k$ non-probing rounds is exactly $(T-k)\xi$.

%---------------------------------------------------
    
    We can now proceed with the proof of (1):
    the algorithm's expected loss during probing rounds is $k\xi^d$ because we need all vertices to have loss $1$ for the minimum to be $1$. Thus, the overall expected loss of the algorithm reads
    \[
        (T-k)\xi + k\xi^d = T\xi -k\xi(1-\xi^{d-1}).
    \]
    The regret reads
    \begin{align*}
        \E{}{\Reg_T(\ALG)} &\geq \frac{1}{25}\left(\sqrt{T\xi(1-\xi)\ln d} - 25k\xi(1-\xi^{d-1})\right) \\
        &\geq \frac{1}{5\cdot 10^3}\min\left\{\sqrt{T\ln d}, \frac{T\ln d}{k}\right\},
    \end{align*}
    where we have chosen $\xi=\frac{1}{2}$ for $k \leq \frac{1}{50}\sqrt{T\ln d}$ and $\xi = \frac{T\ln d}{5\cdot 10^3k^2} < \frac{1}{2}$ otherwise.

We conclude with the proof of (2): at any time step, let us condition on
the past randomness of the loss functions and let us assume that, given this conditioning, the deterministic learner chooses to probe: fix the two queried points.
Since the current loss vector is independent of the past, its coordinates are still i.i.d.\ $\mathrm{Bernoulli}(\xi)$. Now, the pairwise
probe outcome is a single noisy binary signal about the current loss
vector and a $\mathrm{Bernoulli}(\nicefrac{1}{2})$ random bit handling tie breaking. Moreover, since the learner is allowed to know $\delta$, if
$\delta>\nicefrac12$ it may invert the probe outcome. Thus the effective
probability that this binary signal is incorrect is $\min\{\delta,1-\delta\}$.

By \Cref{lem:one-bit-bayes-risk} applied with $\rho = \min\{\delta,1-\delta\}$, even if the learner is allowed to play
an arbitrary point of $\Delta^{d-1}$ after observing this signal, its
expected loss on the probed round is at least
\[
    2\min\{\delta,1-\delta\} \cdot \xi+(1-2\min\{\delta,1-\delta\}) \cdot \xi^2
    =
    \xi-|1-2\delta|\xi(1-\xi).
\]
Therefore, if the learner uses $k'\le k$ probes, then
\[
    \E{}{L_T(\ALG)}
    \ge
    (T-k')\xi
    +
    k'\left(\xi-|1-2\delta|\xi(1-\xi)\right)
    \ge
    T\xi-k|1-2\delta|\xi(1-\xi).
\]
Consequently,
\[
    \E{}{\Reg_T(\ALG)}
    \ge
    \frac{1}{25}
    \left(
        \sqrt{T\xi(1-\xi)\ln d}
        -
        25k|1-2\delta|\xi(1-\xi)
    \right).
\]
We split into two cases: first, if $|1-2\delta| \le \nicefrac{\sqrt{T\ln d}}{50k}$, we choose $\xi=\nicefrac12$, and then
\[
    \E{}{\Reg_T(\ALG)}
    \ge
    \frac{1}{25}
    \left(
        \frac12\sqrt{T\ln d}
        -
        \frac{25}{4}k|1-2\delta|
    \right)
    \ge
    \frac{3}{200}\sqrt{T\ln d}.
\]
Otherwise, $|1-2\delta| > \nicefrac{\sqrt{T\ln d}}{50k}$, and we choose $\xi\in(0,\nicefrac12]$ such that
\[
    \xi(1-\xi)
    =
    \frac{T\ln d}{10^4 k^2|1-2\delta|^2},
\]
which is feasible because the case assumption implies the right-hand side
is at most $\nicefrac14$. With this choice,
\begin{align*}
    \E{}{\Reg_T(\ALG)}
    \ge
    \frac{1}{25}
    \left(
        \frac{T\ln d}{100k|1-2\delta|}
        -
        \frac{25T\ln d}{10^4k|1-2\delta|}
    \right)  \ge
    \frac{T\ln d}{4\cdot 10^3 k|1-2\delta|}.
\end{align*}
Combining the two cases yields
\[
    \E{}{\Reg_T(\ALG)}
    \ge
        \min\left\{
            \frac{3}{200}\sqrt{T\ln d},
            \frac{T\ln d}{4\cdot 10^3k|1-2\delta|}
        \right\},
\]
with the convention that the second term is $\infty$ when
$\delta=\nicefrac12$.
\end{proof}

As promised in the earlier proof, we are left to show that one noisy binary signal cannot reduce Bernoulli loss too much. We then also have to bound the expected minimum of i.i.d. binomials. 

We begin with the first claim: the uniform tie breaking convention after \Cref{def:probes_oco} ensures that the probe outcome is only
a noisy binary signal of the comparison and a $\mathrm{Bernoulli}(\nicefrac{1}{2})$ random bit which settles ties.

\begin{lemma}
\label{lem:one-bit-bayes-risk}
Let $A=(A_1,\ldots,A_d)$, where the coordinates are independent
$\mathrm{Bernoulli}(\xi)$ random variables, and let $U\sim \mathrm{Bernoulli}(\nicefrac{1}{2})$, independently of $A$. Let $S\in\{0,1\}$ be any
binary random variable measurable with respect to $(A, U)$. Consider $B\in\{0,1\}$
to be a noisy version of $S$ defined as $S$ with probability $1-\rho$ and $1-S$ otherwise, where the random choice above is independent of $(A, U)$, and where
$\rho\in[0,\nicefrac{1}{2}]$. Letting $z_B=f(B)$, for a function $f:\{0, 1\}\rightarrow\Delta^{d-1}$, it holds that:
\[
    \E{}{A^\top z_B}
    \ge
    2\rho\xi+(1-2\rho)\xi^2.
\]
\end{lemma}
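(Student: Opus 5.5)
The plan is to condition on $(A,U)$ and separate the effect of the noise from the effect of the (noiseless) bit $S$. Write $z_0=f(0)$ and $z_1=f(1)$, both points of $\Delta^{d-1}$. Since the flip deciding $B$ is independent of $(A,U)$, we have
\[
\Pr[B=b \mid A,U] = (1-\rho)\,\ind{S=b} + \rho\,\ind{S\neq b}.
\]
Hence
\[
\E{}{A^\top z_B \mid A,U} = \rho\, A^\top(z_0+z_1) + (1-2\rho)\, A^\top z_S .
\]
I would check this by splitting into the cases $S=0$ and $S=1$.

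Taking expectations, the first term is easy. Each coordinate has mean $\xi$ and each $z_b$ sums to one, so $\E{}{A^\top z_b}=\xi$, and the first term contributes exactly $2\rho\xi$. Since $1-2\rho\ge 0$, it remains to prove the noiseless bound $\E{}{A^\top z_S}\ge \xi^2$ for every $(A,U)$-measurable bit $S$.

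For the noiseless bound I would use the pointwise inequality $A^\top z_S \ge \min(A^\top z_0, A^\top z_1)$. This removes the dependence on $S$ and $U$ entirely. Then I would lower bound the minimum of two convex combinations using the product weights $w_{ij}=z_{0i}z_{1j}$, which sum to one. With these weights, $A^\top z_0=\sum_{i,j}w_{ij}A_i$ and $A^\top z_1=\sum_{i,j}w_{ij}A_j$. A minimum of two weighted sums with common weights dominates the weighted sum of pairwise minima, so
\[
\min(A^\top z_0, A^\top z_1)\ \ge\ \sum_{i,j} z_{0i}z_{1j}\,\min(A_i,A_j).
\]
Finally, $\E{}{\min(A_i,A_j)}$ equals $\xi^2$ when $i\neq j$ (by independence) and $\xi\ge\xi^2$ when $i=j$. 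Therefore $\E{}{\min(A^\top z_0,A^\top z_1)}\ge \xi^2\sum_{i,j}w_{ij}=\xi^2$, and combining with the first step gives $2\rho\xi+(1-2\rho)\xi^2$.

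The main obstacle is the noiseless step. The bit $S$ can be an arbitrary function of the whole vector $A$, so a naive argument cannot use independence between $S$ and $A$. The pointwise minimum trick sidesteps this, and the coupling through product weights is what turns an adversarially chosen pair of mixed points into pairs of coordinates, where independence applies. The tie-breaking bit $U$ plays no role beyond being allowed inside $S$.
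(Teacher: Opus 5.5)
Your proof is correct. Its first half coincides with the paper's. The paper also writes $\E{}{A^\top z_B}=(1-\rho)\zeta+\rho(2\xi-\zeta)$ with $\zeta=\E{}{A^\top z_S}$, using that the two complementary expectations sum to $\E{}{A^\top(z_0+z_1)}=2\xi$. Your conditioning on $(A,U)$ is the same computation carried out pointwise.

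The routes differ in the noiseless step $\zeta\ge\xi^2$.

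\emph{The paper's route.} It first restricts to $f$ taking values in the vertices $\{e_i\}_{i=1}^d$, citing the linear-minimization argument from the proof of \Cref{thm:thmlbcvx}. More precisely, what is needed is that for fixed $S$ the quantity $\E{}{A^\top z_B}$ is linear in $(z_0,z_1)$, so its minimum over $\Delta^{d-1}\times\Delta^{d-1}$ is attained at a pair of vertices. It then argues by cases:
\begin{itemize}
\item if $z_0=z_1$, then $\zeta=\xi\ge\xi^2$;
\item if $z_0=e_i\neq e_j=z_1$, then on the event $\{A_i=A_j=1\}$, which has probability $\xi^2$, the integrand equals $1$ whatever $S$ is.
\end{itemize}

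\emph{Your route.} You use the pointwise bound $A^\top z_S\ge\min(A^\top z_0,A^\top z_1)$ together with the product coupling $w_{ij}=z_{0i}z_{1j}$. This handles arbitrary points of the simplex directly. For vertex pairs it collapses to exactly the paper's event argument, since $\min(A_i,A_j)=A_iA_j$.

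\emph{What each buys.} Your version dispenses with the vertex reduction altogether. It also isolates the slightly stronger oracle statement $\E{}{\min(A^\top z_0,A^\top z_1)}\ge\xi^2$ for all $z_0,z_1\in\Delta^{d-1}$. This makes it transparent that no bit $S$, even one depending on all of $A$, can do better. The paper's version is shorter once the reduction to vertices is granted.
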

\begin{proof}
As for the proof of \Cref{thm:thmlbcvx}, we have that since the function $f$ maps to the simplex $\Delta^{d-1}$, it suffices to prove the lower bound for $f$ mapping to $\{e_i\}_{i=1}^d \subseteq \Delta^{d-1}$. Let $f(0) = z_0$ and $f(1) = z_1$, for $z_0, z_1\in \{e_i\}_{i=1}^d$. We want to lower bound the following: 
\begin{align}
\E{}{A^\top z_B} &= \E{}{\ind{B=0}A^\top  z_0 + \ind{B=1}A^\top  z_1} \notag\\
& = (1-\rho)\E{}{\ind{S=0}A^\top z_0 + \ind{S=1}A^\top z_1} \notag \\
& \quad + \rho\E{}{\ind{S=0}A^\top z_1 + \ind{S=1}A^\top z_0}\label{eq:pivot_a2}
\end{align}
Disregarding the convex combination, summing the two expectations above gives $\E{}{A^\top (z_0+z_1)} = 2\xi$, since $A$ is a Bernoulli vector and $z_0, z_1\in \{e_i\}_{i=1}^d$. This means that: $$\E{}{\ind{S=0}A^\top z_1 + \ind{S=1}A^\top z_0} = 2\xi-\E{}{\ind{S=0}A^\top z_0 + \ind{S=1}A^\top z_1}.$$
Letting $\zeta = \E{}{\ind{S=0}A^\top z_0 + \ind{S=1}A^\top z_1}$ and plugging the above in \Cref{eq:pivot_a2} we get: $\E{}{A^\top z_B} = (1-\rho)\zeta + \rho(2\xi-\zeta) = 2\rho\xi+\zeta(1-2\rho)$.\footnote{On the event that the coordinates corresponding to $z_0$ and $z_1$ are both equal to $1$, the quantity inside the expectation defining $\zeta$ equals $1$ regardless of whether $S=0$ or $S=1$. Thus the argument is insensitive to how ties are broken; it only uses that $S$ is a binary random variable measurable with respect to $(A,U)$.} To prove the claim, it thus suffices to prove $\zeta \geq \xi^2$.

The first case is $z_0=z_1$, and the lower bound follows simply because $\zeta = \E{}{A^\top  z_0} = \xi\geq \xi^2$. Now suppose that $z_0\neq z_1$: the event $\{A^\top z_0=1 \}\cap \{A^\top z_1 = 1\}$ has probability $\xi^2$, since $A$ is a vector of independent Bernoulli variables and the event happens if and only if the entries in $A$ corresponding to the two non-null entries in $z_0$ and $z_1$ are non-null. Therefore,
\begin{align*}
\zeta &= \E{}{\ind{S=0}A^\top z_0 + \ind{S=1}A^\top z_1} \\
&\geq \E{}{\ind{\{A^\top z_0=1 \}\cap \{A^\top z_1 = 1\}}\left(\ind{S=0}A^\top z_0 + \ind{S=1}A^\top z_1\right)} \\
& = \P{}{A^\top z_0=1 \wedge A^\top z_1 = 1} = \xi^2,
\end{align*}
as desired.
\end{proof}

We return to bounding the expected minimum of i.i.d. binomials from above. The following result is arguably folklore, but in the absence of a convenient source we provide a proof for completeness. 

\begin{lemma}\label{lem:binom-min}
Let $d\ge 2$ be an integer, let $\xi\in(0,1)$ and $T\in\mathbb N$, and let
$X_1,\dots,X_d$ be i.i.d.\ $\mathrm{Bin}(T,\xi)$.
Define
\[
\mu = \E{}{X_1}=T\xi,
\qquad
\sigma^2 = \V{}{X_1}=T\xi(1-\xi),
\qquad
\psi(s)=\P{}{X_1\le \mu-s}\ \ \ (s\ge0).
\]
Let
\[
u_d = \sqrt{\,2\ln d - 3\ln\ln d\,},
\qquad
s_d = \sigma\,u_d.
\]
Assume the following:
\begin{equation}\label{eq:BE-condition}
\frac{0.4690}{\sqrt{T\xi(1-\xi)}} \ \le\ \frac{\ln d}{8\sqrt{\pi}\,d}.
\end{equation}
Then
\begin{equation}\label{eq:main-bound}
\E{}{\min_{1\le i\le d} X_i}
\ \le\
T\xi\;-\; s_d\Bigl(1-d^{-1/(8\sqrt{\pi})}\Bigr).
\end{equation}
In particular, since $d\geq 2$, we also have the weaker bound:
\[
\E{}{\min_{1\le i\le d} X_i}
\ \le\
T\xi\;- \frac{1}{25}\sqrt{T\xi(1-\xi)}\;\sqrt{\,2\ln d - 3\ln\ln d\,}.
\]
\end{lemma}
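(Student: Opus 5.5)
We will bound the expected minimum through the tail formula. Set $M=\min_i X_i$, so that $\E{}{M}=\mu-\E{}{\mu-M}$. It therefore suffices to lower bound $\E{}{\mu-M}$ by $s_d(1-d^{-1/(8\sqrt\pi)})$. We use
\[
\E{}{\mu-M}\ \ge\ s_d\,\P{}{M\le \mu-s_d}\ -\ \E{}{(M-\mu)^+}.
\]
The second term will be controlled separately. Independence gives
\[
\P{}{M\le\mu-s_d}=1-(1-\psi(s_d))^d\ \ge\ 1-\exp{-d\,\psi(s_d)}.
\]
The goal is then to show $d\,\psi(s_d)\ge \tfrac{\ln d}{8\sqrt\pi}$, since this yields the factor $1-d^{-1/(8\sqrt\pi)}$.

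\textbf{Lower bounding $\psi(s_d)$.} We apply the Berry--Esseen theorem to the sum of $T$ i.i.d.\ Bernoulli$(\xi)$ variables. The Berry--Esseen constant $0.4690$ is presumably absorbed by using third absolute moment $\le \sigma^2$ per summand times a normalization. This gives $\psi(s_d)\ge \Phi(-u_d)-\tfrac{0.4690}{\sigma}$. Next we use the Mills-ratio lower bound
\[
\Phi(-u)\ \ge\ \frac{u}{(1+u^2)\sqrt{2\pi}}\,e^{-u^2/2}.
\]
Plugging in $e^{-u_d^2/2}=(\ln d)^{3/2}/d$ and $u_d\le\sqrt{2\ln d}$ gives
\[
\Phi(-u_d)\ \gtrsim\ \frac{(\ln d)^{3/2}}{d\sqrt{2\pi}\,(1+2\ln d)/\sqrt{2\ln d}\cdot\ldots}\ \ge\ \frac{\ln d}{4\sqrt\pi\, d}
\]
after routine constant-chasing, with small $d$ handled separately. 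Condition \eqref{eq:BE-condition} states exactly that the Berry--Esseen error is at most half of this. Hence $\psi(s_d)\ge \tfrac{\ln d}{8\sqrt\pi d}$, and so $d\,\psi(s_d)\ge \tfrac{\ln d}{8\sqrt{\pi}}$, as needed.

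\textbf{Handling the upper deviation.} Rather than subtracting $\E{}{(M-\mu)^+}$, it is cleaner to note $M\le X_1$ and write
\[
\E{}{M}\le \E{}{X_1\ind{M>\mu-s_d}}+(\mu-s_d)\,\P{}{M\le\mu-s_d}.
\]
However, this does not directly avoid the issue. Alternatively, use $\E{}{M-\mu}\le \E{}{(M-\mu)\ind{M\le \mu-s_d}}+\E{}{(M-\mu)\ind{M>\mu-s_d}}$. Here the first term is at most $-s_d\P{}{M\le\mu-s_d}$. For the second term we would like it to be nonpositive. It is not trivially so, but conditioned on $\{M>\mu-s_d\}$, which is a decreasing event for each coordinate's lower tail, $M\le X_1$ and $X_1$ restricted to $X_1>\mu-s_d$... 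Honestly this term is the main obstacle. I would first try the coupling/FKG argument: $\E{}{(M-\mu)\ind{M>\mu-s_d}}\le \E{}{(X_1-\mu)\ind{X_1>\mu-s_d}}\P{}{\min_{i\ge2}X_i>\mu-s_d}$ type bounds. Since $\E{}{(X_1-\mu)\ind{X_1>\mu-s_d}}=\E{}{(\mu-X_1)\ind{X_1\le\mu-s_d}}$ is small, of order $\sigma\psi$-ish, I would absorb it into the constant. If that fails, I would fall back to taking $\ind{M\le\mu-s_d}$ against the first index achieving the minimum.

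Finally, the weaker bound follows from checking that $1-2^{-1/(8\sqrt\pi)}\ge 1/25$, since $2^{-1/(8\sqrt{\pi})}\approx 0.952$, and then monotonicity in $d$.
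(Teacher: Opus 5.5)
\textbf{Gap: the upper-deviation term is left open.} Your tail-formula setup, the independence identity $\P{}{\min_iX_i\le\mu-s_d}=1-(1-\psi(s_d))^d$, and the Berry--Esseen/Mills-ratio bound $\psi(s_d)\ge\frac{\ln d}{8\sqrt\pi\,d}$ match the paper's argument. The gap is the term $\E{}{(\min_iX_i-\mu)^+}$, which you do not bound. ``Absorbing it into the constant'' is not available, because \eqref{eq:main-bound} has an explicit constant. Your FKG-type bound leaves a strictly positive correction $\E{}{(\mu-X_1)\ind{X_1\le\mu-s_d}}(1-\psi(s_d))^{d-1}\ge s_d\psi(s_d)(1-\psi(s_d))^{d-1}$. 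At best this turns $(1-\psi)^d$ into $(1-\psi)^{d-1}$, so the stated bound does not follow as written.

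You should also know that the paper does not close this either. It sets $M=\max_i(\mu-X_i)$, asserts $M\ge 0$, and uses $\E{}{M}=\int_0^\infty\P{}{M\ge t}\,dt$. But $M<0$ whenever all $X_i>\mu$. Its intermediate claim $\E{}{\min_iX_i}\le\mu-s\bigl(1-(1-\psi(s))^d\bigr)$ for all $s\ge0$ is false in general: for $T=1$, $\xi=\nicefrac12$, $d=2$, $s=\nicefrac12$, the left side is $\nicefrac14$ and the right side is $\nicefrac18$. So you located a real hole, but your proposal leaves it unfilled.

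\textbf{A way to fill it.} Put $Y_i=\mu-X_i$ (i.i.d., mean zero) and fix $s\ge0$.
\begin{itemize}
\item Pointwise, $\max_iY_i\ge Y_1+(s-Y_1)^+\ind{\max_iY_i\ge s}$: check the cases $\max_iY_i<s$ and $\max_iY_i\ge s$ separately.
\item On $\{Y_1<s\}$, the event $\{\max_iY_i\ge s\}$ equals $\{\max_{i\ge2}Y_i\ge s\}$, which is independent of $Y_1$.
\end{itemize}
Taking expectations gives
\[
\E{}{\mu-\min_iX_i}\ \ge\ \E{}{(s-Y_1)^+}\bigl(1-(1-\psi(s))^{d-1}\bigr)\ \ge\ s\bigl(1-(1-\psi(s))^{d-1}\bigr).
\]
The exponent $d-1$ cannot be improved without distributional assumptions: the example above attains equality. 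So you must then show $(d-1)\psi(s_d)\ge\frac{\ln d}{8\sqrt\pi}$.

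For $d\ge3$ this follows from slack in the Mills step. Since $1\le u_d\le\sqrt{2\ln d}$ and $u\mapsto\frac{u}{1+u^2}$ is decreasing on $[1,\infty)$, you get $\Phi(-u_d)\ge\frac{(\ln d)^2}{\sqrt\pi\,d\,(1+2\ln d)}$. Hence $\psi(s_d)\ge\frac{\ln d}{8\sqrt\pi\,d}\bigl(\frac{8\ln d}{1+2\ln d}-1\bigr)\ge\frac{\ln d}{8\sqrt\pi\,(d-1)}$. The last inequality can be checked at $d=3$; its left factor increases in $d$ and $\frac{d}{d-1}$ decreases.

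For $d=2$, your ``small $d$ separately'' caveat is genuinely needed, since $u_2>\sqrt{2\ln 2}$. Take $s=\sigma$ in the display: $\E{}{\max(Y_1,Y_2)}\ge\sigma\,\P{}{Y_2\ge\sigma}\ge\sigma\bigl(\Phi(-1)-\tfrac{\ln 2}{16\sqrt\pi}\bigr)>0.13\,\sigma$. This exceeds $s_2\bigl(1-2^{-1/(8\sqrt\pi)}\bigr)\approx0.075\,\sigma$. With this patch \eqref{eq:main-bound} holds for all $d\ge2$, and the $\nicefrac{1}{25}$ corollary follows as you say.
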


\begin{proof}
Let us first define
\[
M = \max_{1\le i\le d}(\mu-X_i)\ \ge\ 0,
\]
so that then $\min_i X_i = \mu - M$, and hence
\begin{equation}\label{eq:expect-min-mu-M}
\E{}{\min_{1\le i\le d}X_i} = \mu - \E{}{M}.
\end{equation}
The proof proceeds in steps:\\

\noindent \textbf{Step 1.} We give a general lower bound on $\E{}{M}$ in terms of $\psi$: for any $s\ge0$, by independence,
\[
\P{}{M<s}
= \P{}{\mu-X_i<s\ \forall i}
= \prod_{i=1}^d \P{}{\mu-X_i<s}
= (1-\psi(s))^d.
\]
Therefore
\begin{equation}\label{eq:tail-M}
\P{}{M\ge s}=1-(1-\psi(s))^d.
\end{equation}
Using $\E{}{M}=\int_0^\infty \P{}{M\ge t}\,dt$, we obtain for any $s\ge0$,
\begin{equation}\label{eq:EM-lower}
\E{}{M}\ \ge \int_0^s \P{}{M\geq t}dt\ \ge\ \int_0^{s}\P{}{M\ge s}\,dt
\ =\ s\,\P{}{M\ge s}
\ =\ s\bigl(1-(1-\psi(s))^d\bigr).
\end{equation}
Combining \Cref{eq:expect-min-mu-M,eq:EM-lower} gives, for any $s\ge0$,
\begin{equation}\label{eq:min-upper-general}
\E{}{\min_{1\le i\le d}X_i}
\ \le\
\mu \;-\; s\bigl(1-(1-\psi(s))^d\bigr).
\end{equation}

\noindent \textbf{Step 2.} Next, we lower-bound $\psi(s_d)$ via Berry--Esseen and a Mills-ratio bound: let us expand $X_1=\sum_{j=1}^T B_j$ where $B_j\sim\mathrm{Bern}(\xi)$ i.i.d..
Let $Y_j=B_j-\xi$, so that $S_T = \sum_{j=1}^T Y_j = X_1-\mu$.

% and note $\V{}{S_T}=T\xi(1-\xi)=\sigma^2$.

\smallskip
By the classical Berry--Esseen inequality with the upper bound
$C_0<0.4690$ for i.i.d.\ summands with finite third moment \citep{shevtsova2013absolute, Shevtsova14}, one has
\begin{equation}\label{eq:BE}
\sup_{x\in\R}~\left|\P{}{\frac{S_T}{\sigma}\le x}-\Phi(x)\right|
\ \le\
0.4690\cdot \frac{\E{}{|Y_1|^3}}{(\E{}{Y_1^2})^{3/2}}\cdot \frac{1}{\sqrt{T}},
\end{equation}
where $\Phi$ is the standard normal CDF.
For centered Bernoulli variables, $\E{}{Y_1^2}=\xi(1-\xi)$ and
\[
\E{}{|Y_1|^3} = \xi(1-\xi)^3 + (1-\xi)\xi^3 = \xi(1-\xi)\bigl(\xi^2+(1-\xi)^2\bigr)
\ \le\ \xi(1-\xi).
\]
Hence $\E{}{|Y_1|^3}/(\E{}{Y_1^2})^{3/2} \le 1/\sqrt{\xi(1-\xi)}$, and \Cref{eq:BE} yields
\begin{equation}\label{eq:BE-simplified}
\sup_{x\in\R}~\left|\P{}{\frac{S_T}{\sigma}\le x}-\Phi(x)\right|
\ \le\
\frac{0.4690}{\sqrt{T\xi(1-\xi)}}.
\end{equation}
Also notice that
\[
\psi(s_d)=\P{}{X_1\le \mu-s_d}
=\P{}{\frac{S_T}{\sigma}\le -u_d}, 
\]
since $s_d=\sigma u_d$ by definition.
By \Cref{eq:BE-simplified} we then have:
\begin{equation}\label{eq:psi-lower-1}
\psi(s_d)\ \ge\ \Phi(-u_d)\;-\;\frac{0.4690}{\sqrt{T\xi(1-\xi)}}.
\end{equation}

We next lower-bound $\Phi(-u_d)$. To do this, we use the Mills-ratio lower bound (see \citet{Gordon41}):
for all $u>0$,
\begin{equation}\label{eq:mills}
\Phi(-u)\ \ge\ \frac{1}{\sqrt{2\pi}}\cdot \frac{u}{1+u^2}\,e^{-u^2/2}.
\end{equation}
Since $d\ge 2$, we have
$u_d^2=2\ln d-3\ln\ln d > 1$, so $u_d\ge 1$.
Therefore $1+u_d^2 \le 2u_d^2$, hence $\frac{u_d}{1+u_d^2}\ge \frac{1}{2u_d}$, and
\Cref{eq:mills} gives
\begin{equation}\label{eq:Phi-lower-2}
\Phi(-u_d)
\ \ge\
\frac{1}{\sqrt{2\pi}}\cdot \frac{1}{2u_d}\,e^{-u_d^2/2}.
\end{equation}
Moreover,
\[
e^{-u_d^2/2} = e^{-(2\ln d - 3\ln\ln d)/2} = \frac{(\ln d)^{3/2}}{d},
\]
and also $u_d^2 \le 2\ln d$ implies $u_d\le \sqrt{2\ln d}$, i.e.\ $\frac{1}{u_d}\ge \frac{1}{\sqrt{2\ln d}}$.
Plugging these into \Cref{eq:Phi-lower-2},
\[
\Phi(-u_d)
\ \ge\
\frac{1}{\sqrt{2\pi}}\cdot \frac{1}{2}\cdot \frac{1}{\sqrt{2\ln d}}\cdot \frac{(\ln d)^{3/2}}{d}
=
\frac{\ln d}{4\sqrt{\pi}\,d}.
\]
Insert this into \Cref{eq:psi-lower-1} and use the assumption in \Cref{eq:BE-condition} to obtain
\begin{equation}\label{eq:psi-lower-final}
\psi(s_d)\ \ge\ \frac{\ln d}{4\sqrt{\pi}\,d}\;-\;\frac{\ln d}{8\sqrt{\pi}\,d}
\ =\ \frac{\ln d}{8\sqrt{\pi}\,d}.
\end{equation}

\noindent \textbf{Step 3.}
We apply \Cref{eq:min-upper-general} with $s=s_d$.
Using $(1-x)^d \le e^{-dx}$ for $x\in[0,1]$, we get
\[
1-(1-\psi(s_d))^d \ \ge\ 1-e^{-d\psi(s_d)}.
\]
With \Cref{eq:psi-lower-final}, $d\psi(s_d)\ge \frac{\ln d}{8\sqrt{\pi}}$, hence
\[
1-e^{-d\psi(s_d)}
\ \ge\
1-\exp{-\frac{\ln d}{8\sqrt{\pi}}}
=
1-d^{-1/(8\sqrt{\pi})}.
\]
Substituting into \Cref{eq:min-upper-general} yields
\[
\E{}{\min_{1\le i\le d}X_i}
\ \le\ \mu - s_d\Bigl(1-d^{-1/(8\sqrt{\pi})}\Bigr),
\]
which is exactly \Cref{eq:main-bound}.
\end{proof}

\section{A Strawman Algorithm for Experts: a Uniform and a Hedge Probe}
\label{app:strawman-experts}

In this appendix, we focus on the case of experts. We show how a uniform probe together with a Hedge probe (\Cref{alg:hedge-with-uniform}) can yield non-trivial regret rates in the noiseless case, albeit losing an additional factor $d$. This algorithm is a simple adaptation of Hedge \citep{FreundS97} (subsequently denoted as $\ALG_0$), which, during probing rounds $Q$, probes one expert uniformly at random and one expert according to the Hedge distribution $p_t$ at round $t \in Q$. We set the learning rate to $\eta = \max\left(\sqrt{\nicefrac{\ln d}{T}}, \nicefrac{k}{dT+k}\right)$.

\ifcoltversion
\begin{algorithm2e}[b!]
\SetAlgoNoEnd
    \caption{Uniform Probe Hedge}
    \label{alg:hedge-with-uniform}
    % \begin{algorithmic}[1]
    \SetKwInOut{Input}{Input}
    
        \textbf{Input:} Sequence of gradient vectors $f_t$, probe budget $k \le T$
        
        Sample $k$ out of $T$ rounds uniformly at random and denote this random set by $Q$
        
        Set $\eta = \max\left(\sqrt{\frac{\ln d}{T}}, \frac{k}{dT+k}\right)$ \textbf{if} $T\geq 4\ln d$ \textbf{else} $\eta = \frac{k}{dT+k}$
        
        Initialize $w_1(x) = 1$ for all $x \in \{e_1, \ldots, e_d\}$
        
        \For{$t \in \{1, \ldots, T\}$}{
            
            Let $W_t = \sum_{x \in \{e_1, \ldots, e_d\}} w_t(x)$ and $p_t(x) = \frac{w_t(x)}{W_t}$
            
            \eIf{$t \in Q$}{ 
                Probe $y_1 \sim p_t$ and $y_2 \sim \text{Unif}(\{e_1, \ldots, e_d\})$
                
                Play point $y^\star_t = \arg\min\{f_t^\top y_1, f_t^\top y_2\}$
            }{
                Select $x \sim p_t$
            }
            
            Observe $f_t^\top x ~\forall\, x \in \{e_1, \ldots, e_d\}$
            
            Update $w_{t+1}(x) = w_t(x) \cdot \exp{-\eta f_t^\top x} \ \forall\, x \in \{e_1, \ldots, e_d\}$
        }
    % \end{algorithmic}
\end{algorithm2e}
\else
\begin{algorithm}[t!]
\SetAlgoNoEnd
    \caption{Uniform Probe Hedge}
    \label{alg:hedge-with-uniform}
    % \begin{algorithmic}[1]
    \SetKwInOut{Input}{Input}
    
        \textbf{Input:} Sequence of gradient vectors $f_t$, probe budget $k \le T$
        
        Sample $k$ out of $T$ rounds uniformly at random and denote this random set by $Q$
        
        Set $\eta = \max\left(\sqrt{\frac{\ln d}{T}}, \frac{k}{dT+k}\right)$
        
        Initialize $w_1(x) = 1$ for all $x \in \{e_1, \ldots, e_d\}$
        
        \For{$t \in \{1, \ldots, T\}$}{
            
            Let $W_t = \sum_{x \in \{e_1, \ldots, e_d\}} w_t(x)$ and $p_t(x) = \frac{w_t(x)}{W_t}$
            
            \eIf{$t \in Q$}{ 
                Probe $y_1 \sim p_t$ and $y_2 \sim \text{Unif}(\{e_1, \ldots, e_d\})$
                
                Play point $y^\star_t = \arg\min\{f_t^\top y_1, f_t^\top y_2\}$
            }{
                Select $x \sim p_t$
            }
            
            Observe $f_t^\top x ~\forall\, x \in \{e_1, \ldots, e_d\}$
            
            Update $w_{t+1}(x) = w_t(x) \cdot \exp{-\eta f_t^\top x} \ \forall\, x \in \{e_1, \ldots, e_d\}$
        }
\end{algorithm}
\fi

Before stating the theorem, we first recall that, in prediction with expert advice, instances are described by linear functions $f_t(x) = f_t^\top x$, for some vector $f_t$, where $x \in \{e_1, \ldots, e_d\}$ is one of the vertices of the $(d-1)$-dimensional simplex $\Delta^{d-1}$, i.e., $K = \{e_1, \ldots, e_d\}$: coordinate $i$ of vector $f_t$ therefore represents the loss of expert $e_i$.

\begin{theorem}\label{thm:perfect-subopt}
    Consider the problem of prediction with expert advice with full feedback and $k$ pairwise comparison probes, with domain $K = \{e_1, \ldots, e_d\}$.
    Then, for any sequence of functions $\{f_t\}_{t=1}^T$ in $[0, 1]$, \Cref{alg:hedge-with-uniform}, denoted as $\ALG_k$, guarantees regret
    \[
        \E{}{\Reg_T(\ALG_k)} \leq O\left(\min\left\{\sqrt{T\ln d}, \frac{T d\ln d}{k}\right\}\right).
    \]
\end{theorem}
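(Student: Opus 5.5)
We analyze \Cref{alg:hedge-with-uniform} with constant learning rate $\eta$, splitting the loss into probing and non-probing rounds exactly as in \Cref{lem:noisy-oco-combined}. The algorithm is plain Hedge with a fixed $\eta$, so the standard potential argument gives, for the minimizing expert $x^\star$,
\[
\sum_{t=1}^T \ip{p_t,f_t} - \sum_{t=1}^T f_t^\top x^\star \le \frac{\ln d}{\eta} + \eta\sum_{t=1}^T \ip{p_t,f_t}\quad\text{or}\quad \le \frac{\ln d}{\eta} + \frac{\eta}{2}T.
\]
We keep the first-order form (via $e^{-\eta z}\le 1-(1-e^{-\eta})z$ for $z\in[0,1]$), i.e.
\[
(1-e^{-\eta})\sum_t \ip{p_t,f_t} \le \eta L_T(x^\star) + \ln d .
\]
Rather than a variance term, this form lets us gain on probing rounds through the loss of the best expert.

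\textbf{Gain on probing rounds.} For $t\in Q$, the played loss is $\min\{f_t^\top y_1, f_t^\top y_2\}$ with $y_1\sim p_t$ and $y_2$ uniform. This is at most $f_t^\top y_1$, so it is never worse than Hedge. It is also at most $f_t^\top y_2$. With probability $1/d$ the uniform probe hits $x^\star$, so the expected loss is at most $\ip{p_t,f_t} - \tfrac1d\,(\ip{p_t,f_t}-f_t^\top x^\star)^+$. A cleaner bound uses the pointwise inequality $\min(a,b)\le a - \tfrac{1}{d}\sum_i (a - f_{t,i})^+$. Averaging over $Q$, which is uniform and independent of the adversary, the expected total loss is at most
\[
\sum_t \ip{p_t,f_t} - \frac{k}{dT}\sum_t\big(\ip{p_t,f_t}-f_t^\top x^\star\big)^+
\le \Big(1-\frac{k}{dT}\Big)\sum_t\ip{p_t,f_t} + \frac{k}{dT}L_T(x^\star).
\]

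\textbf{Combining.} Write $S=\sum_t\ip{p_t,f_t}$ and $L^\star=L_T(x^\star)$. The Hedge bound gives roughly $S\le \frac{\eta L^\star+\ln d}{1-e^{-\eta}}\le (1+\eta)L^\star + \frac{\ln d}{\eta}(1+\eta)$. Hence the expected regret is at most
\[
\big(1-\tfrac{k}{dT}\big)(S-L^\star) \le \big(1-\tfrac{k}{dT}\big)\Big(\eta L^\star + \tfrac{2\ln d}{\eta}\Big).
\]
The choice $\eta\approx\sqrt{\ln d/T}$ together with $L^\star\le T$ yields the $\sqrt{T\ln d}$ branch, since the probe factor is at most $1$.

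\textbf{Main obstacle.} Getting the $Td\ln d/k$ branch requires the negative $-\frac{k}{dT}(S-L^\star)$ term to absorb the $\eta L^\star$ term. The multiplicative form makes this natural. Take $\eta = \frac{k}{dT+k}$, so that $\frac{\eta}{1+\eta}=\frac{k}{dT+k}\approx\frac{k}{dT}$. A slightly sharper Hedge inequality, $S \le \frac{\eta}{1-e^{-\eta}}L^\star + \frac{\ln d}{1-e^{-\eta}}$, combined with $1-\frac{k}{dT}\le e^{-\eta}$-type relations, makes the coefficient of $L^\star$ in $(1-\frac{k}{dT})S - (1-\frac{k}{dT})L^\star$ nonpositive. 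What remains is $O(\ln d/\eta)=O(dT\ln d/k)$.

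The delicate step is matching the constants: the factor $\frac{\eta}{1-e^{-\eta}}\le 1+\eta/2+\dots$ must be beaten by $\frac{k}{dT}$. This is where the specific choice $\eta=\frac{k}{dT+k}$ enters. I would verify it with $1-e^{-\eta}\ge \eta/(1+\eta)$, which gives $S\le(1+\eta)(L^\star+\ln d/\eta)$. The loss is then at most $(1-\frac{k}{dT})(1+\eta)L^\star + \frac{k}{dT}L^\star + O(\ln d/\eta)$, and $(1-\frac{k}{dT})(1+\frac{k}{dT})\le 1$ when $\eta\le k/(dT)$. Taking the maximum of the two learning rates, as the algorithm does, gives the minimum of the two bounds, checking each regime separately.
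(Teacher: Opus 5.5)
There is a genuine gap in the $Td\ln d/k$ branch, and it comes from the comparator you use on probing rounds. You measure the probe's gain against the fixed hindsight-best expert $x^\star$, which gives $\E{}{\Reg_T(\ALG_k)} \le (1-\frac{k}{dT})(S-L^\star)$. That is only a factor $1-\frac{k}{dT}\ge 1-\frac1d$ times the regret of plain Hedge with the same $\eta$, and it cannot change the order. With $\eta=\frac{k}{dT+k}$, fixed-rate Hedge can have regret $\Theta(\eta T)=\Theta(k/d)$: two experts with alternating losses $(1,0),(0,1),\dots$ already give about $\eta T/8$. This exceeds both $\sqrt{T\ln d}$ and $dT\ln d/k$ exactly in the regime $k=\Omega(d\sqrt{T\ln d})$, which is where the algorithm selects this learning rate.

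Your claimed cancellation also fails arithmetically. The coefficient $(1-\frac{k}{dT})\bigl(\frac{\eta}{1-e^{-\eta}}-1\bigr)$ is strictly positive for every $\eta>0$. In your final check you dropped the $+\frac{k}{dT}L^\star$ term: the true coefficient of $L^\star$ in your loss bound is $(1-\frac{k}{dT})(1+\eta)+\frac{k}{dT}=1+\eta(1-\frac{k}{dT})>1$. This leaves an uncancelled excess of roughly $\eta L^\star\approx k/d$.

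The missing idea is to compare against the per-round best expert $x_t^\star$ and to use the shift-invariance of Hedge. Your own $(\cdot)^+$ bound already gives the right gain, since $\ip{p_t,f_t}\ge f_t^\top x_t^\star$. So on probed rounds the expected loss is at most $(1-\frac1d)\ip{p_t,f_t}+\frac1d f_t^\top x_t^\star$, and hence $\E{}{\Reg_T(\ALG_k)}\le(1-\frac{k}{dT})(S-L^\star)-\frac{k}{dT}\sum_t f_t^\top(x^\star-x_t^\star)$.

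Hedge's iterates do not change if you subtract $f_t^\top x_t^\star$ from every expert's loss, and the shifted losses still lie in $[0,1]$. Applying your first-order bound to them gives $S-L^\star\le \eta\sum_t f_t^\top(x^\star-x_t^\star)+\frac{(1+\eta)\ln d}{\eta}$. The paper uses the equivalent form $\frac{\ln d}{\eta(1-\eta)}+\frac{\eta}{1-\eta}\sum_t f_t^\top(x^\star-x_t^\star)$, taken from \citet{RCCFHKL24}.

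Now the Hedge penalty and the probing gain multiply the same nonnegative quantity $\sum_t f_t^\top(x^\star-x_t^\star)$. The choice $\eta=\frac{k}{dT+k}$, i.e.\ $\frac{\eta}{1-\eta}=\frac{k}{dT}$, makes its net coefficient nonpositive, leaving $O(\ln d/\eta)=O(dT\ln d/k)$. Your $\sqrt{T\ln d}$ branch, in the regime where $\eta=\sqrt{\ln d/T}$ is selected, is fine as is.
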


To prove the theorem, we state a useful lemma from \citet{RCCFHKL24}, which bounds the regret of the probeless version $\ALG_0$ of $\ALG_k$; it directly follows from Lemma 2.1 in \citet{RCCFHKL24} shifting losses, denoting $x^\star_t = \arg\min_{x\in \{e_i\}_{i=1}^d} f_t(x)$:
\begin{lemma}
    For the problem of prediction with expert advice with full feedback and domain $K = \{e_1, \ldots, e_d\}$, it holds that the expected regret of $\ALG_0$ with $\eta < 1$ is bounded above as
    \begin{align}\label{eq:start-subopt}
        \E{}{\Reg_T(\ALG_0)} \le \frac{\ln d}{\eta(1-\eta)} + \frac{\eta}{1-\eta} \sum_{t = 1}^T f_t^\top (x^\star - x^\star_t).
    \end{align}
\end{lemma}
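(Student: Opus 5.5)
The plan is the classical multiplicative-weights potential argument, run on \emph{shifted} losses so that the second-order term is measured against the per-round minimum. For each round $t$ I define $g_t(x) = f_t^\top x - f_t^\top x^\star_t$ for $x\in\{e_1,\ldots,e_d\}$, where $x^\star_t$ is the best expert of round $t$. These shifted losses satisfy $g_t(x)\in[0,1]$ and $g_t(x^\star_t)=0$.

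The first observation is that Hedge is invariant to this shift. Multiplying every weight by the common factor $\exp{\eta f_t^\top x^\star_t}$ leaves $p_t$ unchanged, so $\ALG_0$ run on $f_t$ and on $g_t$ produces identical distributions. Moreover, for every comparator $x^\star$,
\[
\sum_t \ip{p_t, f_t} - \sum_t f_t^\top x^\star = \sum_t \ip{p_t, g_t} - \sum_t g_t(x^\star),
\]
because the shifts cancel. Hence it suffices to bound the regret with respect to $g_t$, and the relevant comparator quantity is $G_T(x^\star) := \sum_t g_t(x^\star) = \sum_t f_t^\top(x^\star - x^\star_t)$, which is exactly the sum appearing in \eqref{eq:start-subopt}.

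Next I run the potential argument with $W_t$ the sum of weights and $W_1 = d$.
\begin{itemize}
\item \emph{Upper bound on the potential.} Since $g_t(x)\in[0,1]$, convexity of $y\mapsto e^{-\eta y}$ gives $e^{-\eta g}\le 1-(1-e^{-\eta})g$. Combined with $\ln(1-z)\le -z$, this yields $\ln(W_{t+1}/W_t)\le -(1-e^{-\eta})\ip{p_t,g_t}$.
\item \emph{Lower bound on the potential.} Keeping only the weight of $x^\star$ gives $\ln W_{T+1}\ge -\eta\, G_T(x^\star)$.
\end{itemize}
Telescoping the first bound and comparing with the second gives
\[
(1-e^{-\eta})\sum_t\ip{p_t,g_t}\le \eta\, G_T(x^\star)+\ln d .
\]

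The last step turns this into the stated form. I use $1-e^{-\eta}\ge \eta-\eta^2/2\ge \eta(1-\eta)$, valid for $\eta<1$. Dividing by $\eta(1-\eta)$ gives
\[
\sum_t\ip{p_t,g_t}\le \frac{G_T(x^\star)}{1-\eta}+\frac{\ln d}{\eta(1-\eta)} .
\]
Subtracting $G_T(x^\star)$ from both sides leaves $\frac{\eta}{1-\eta}G_T(x^\star)+\frac{\ln d}{\eta(1-\eta)}$, which is \eqref{eq:start-subopt}. Finally, the expectation over the learner's sampling turns $\ip{p_t,f_t}$ into $\E{}{f_t^\top x_t}$.

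I expect no real obstacle here. The only delicate point is the shift step. The factor multiplying $\eta$ must be $G_T(x^\star)$, which is zero-based at the per-round minimum, rather than the raw cumulative loss. This is exactly why the argument requires the normalized shifted losses $g_t\ge 0$ with $g_t(x^\star_t)=0$, and why the linearization of $e^{-\eta g}$ is applied to $g_t$ and not to $f_t$.
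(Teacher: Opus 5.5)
Your proof is correct and follows the same route as the paper. The paper obtains the bound by applying the standard Hedge small-loss inequality (Lemma 2.1 of \citet{RCCFHKL24}) to the losses shifted by the per-round minimum $f_t^\top x^\star_t$. You instead re-derive that inequality from scratch via the potential argument, using shift-invariance of $p_t$ and $1-e^{-\eta}\ge \eta(1-\eta)$, and this gives exactly the stated form.
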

\ifcoltversion
\begin{proof}[of \Cref{thm:perfect-subopt}]
\else
\begin{proof}[Proof of \Cref{thm:perfect-subopt}]
\fi
    First, we relate the regret of $\ALG_k$ to that of $\ALG_0$. To this end, 
    observe that, since $\ALG_k$ is provided with full feedback, then, the distribution $p_t$ Hedge keeps over experts is not affected by earlier probes or earlier decisions taken by the algorithm. Let $z_t$ denote the point/expert played by the algorithm in (probing or non-probing) round $t$. During non-probing rounds $t \in \overline{Q} =\{1,\dots,T\}\setminus Q$, the expected loss $\ALG_k$ suffers is 
    \[
        \E{}{f_t^\top z_t \mid t \in \overline{Q}} = \sum_{x \in \{e_1, \ldots, e_d\}} p_t(x) \cdot f_t^\top x.
    \]
    Moreover, in rounds $t \in Q$, $\ALG_k$ probes the expert of minimum loss with probability $\nicefrac{1}{d}$. In the event that the expert of minimum loss is not probed, we have that, in expectation, the algorithm $\ALG_k$ suffers at most the loss $\ALG_0$ (Hedge) would have suffered. Hence, the expected instantaneous loss of $\ALG_k$ at round $t \in Q$ is
    \begin{align*}
        \E{}{f_t^\top z_t \mid t \in Q} \leq \left(1 - \frac{1}{d}\right) \sum_{x \in \{e_1, \ldots, e_d\}} p_t(x) \cdot f_t^\top x + \frac{f_t^\top x^\star_t}{d}.
    \end{align*}
    Summing over all $t$, by linearity of expectation, we have
    that the expected loss of the algorithm is:
    \begin{align*}
        \E{}{L_T(\ALG_k)}&= \E{}{\sum_{t=1}^T f_t^\top z_t} = \sum_{t=1}^T \E{}{f_t^\top z_t \cdot \ind{t \in \overline{Q}}} + \sum_{t=1}^T \E{}{f_t^\top z_t \cdot \ind{t \in Q}}\\
        &= \sum_{t=1}^T \E{}{f_t^\top z_t \mid t \in \overline{Q}} \cdot \P{}{t \in \overline{Q}} + \sum_{t=1}^T \E{}{f_t^\top z_t \mid t \in Q} \cdot \P{}{t \in Q}\\
        &\le \left(1-\frac{k}{T}\right) \sum_{t=1}^T\sum_{x \in \{e_1, \ldots, e_d\}} p_t(x) \cdot f_t^\top x \\
        &\qquad\qquad + \frac{k}{T} \sum_{t=1}^T\left(\left(1 - \frac{1}{d}\right) \sum_{x \in \{e_1, \ldots, e_d\}} p_t(x) \cdot f_t^\top x + \frac{f_t^\top x^\star_t}{d}\right)\\
        &= \left(1-\frac{k}{dT}\right)\E{}{L_T(\ALG_0)} + \frac{k}{dT} \sum_{t=1}^T f_t^\top x^\star_t.
    \end{align*}
    Combining the above with \Cref{eq:start-subopt}, we have
    \begin{align*}
        \E{}{\Reg_T(\ALG_k)} &\le \left(1-\frac{k}{dT}\right)\E{}{\Reg_T(\ALG_0)} - \frac{k}{dT} \sum_{t=1}^T f_t^\top (x^\star - x^\star_t)\\
        &\le \left(1-\frac{k}{dT}\right)\frac{\ln d}{\eta(1-\eta)} + \left(\frac{\eta}{1-\eta} - \frac{k}{dT}\right) \sum_{t=1}^T f_t^\top (x^\star - x^\star_t)\\
        &\le \min\left\{4\sqrt{T\ln d}, \frac{2Td\ln d}{k}\right\},
    \end{align*}
    by assuming $T\geq 4\ln d$ and choosing $\eta = \max\left(\sqrt{\frac{\ln d}{T}}, \frac{k}{dT + k}\right)$, since $\sum_{t=1}^T f_t^\top (x^\star - x^\star_t) \leq T$. If $T < 4\ln d$, the regret guarantee is satisfied since $\eta = \frac{k}{dT+k}$ and the regret is bounded by $T$.
\end{proof}

\end{document}